\newcolumntype{C}[1]{>{\centering\arraybackslash}b{\dimexpr #1\linewidth}}
\newcolumntype{L}[1]{>{\raggedright\arraybackslash}b{\dimexpr #1\linewidth}}
\newcolumntype{M}[2]{>{\centering\arraybackslash}b{\dimexpr #1\linewidth+#2\tabcolsep}}
\newtheorem{theorem}{Theorem}
\newtheorem{lemma}{Lemma}
\newtheorem{definition}{Definition}
\newtheorem{assumption}{Assumption}
\newtheorem{remark}{Remark}
\definecolor{snsblue}{rgb}{0.12156862745098039, 0.4666666666666667, 0.7058823529411765}
\definecolor{snsorange}{rgb}{1.0, 0.4980392156862745, 0.054901960784313725}
\newcommand{\defref}[1]{Definition~\ref{#1}}
\newcommand{\thmref}[1]{Theorem~\ref{#1}}
\newcommand{\lemref}[1]{Lemma~\ref{#1}}
\newcommand{\assref}[1]{Assumption~\ref{#1}}
\newcommand{\assrefrange}[2]{Assumptions~\ref{#1}--\ref{#2}}
\newcommand{\remref}[1]{Remark~\ref{#1}}
\newcommand{\E}{\mathop{\mathbb{E}}}
\newcommand{\Ein}{\mathbb{E}}
\newcommand{\sig}{\textsuperscript{$\ast$}}
\newcommand{\sigspace}{\hphantom{\textsuperscript{$\ast$}}}
\newcommand{\secref}[1]{Section~\ref{#1}}
\newcommand{\figref}[1]{Figure~\ref{#1}}
\newcommand{\algref}[1]{Algorithm~\ref{#1}}
\newcommand{\tabref}[1]{Table~\ref{#1}}
\title{Optimal Transport Perturbations for Safe Reinforcement Learning with Robustness Guarantees}
\author{\name James Queeney \email queeney@merl.com \\
      \addr Mitsubishi Electric Research Laboratories
      \AND
      \name Erhan Can Ozcan \email cozcan@bu.edu \\
      \addr Division of Systems Engineering \\ 
      Boston University
      \AND
      \name Ioannis Ch. Paschalidis \email yannisp@bu.edu\\
      \name Christos G. Cassandras \email cgc@bu.edu\\
      \addr Department of Electrical and Computer Engineering and Division of Systems Engineering \\
      Boston University}
\begin{document}


\maketitle


\begin{abstract}
Robustness and safety are critical for the trustworthy deployment of deep reinforcement learning. Real-world decision making applications require algorithms that can guarantee robust performance and safety in the presence of general environment disturbances, while making limited assumptions on the data collection process during training. In order to accomplish this goal, we introduce a safe reinforcement learning framework that incorporates robustness through the use of an optimal transport cost uncertainty set. We provide an efficient implementation based on applying Optimal Transport Perturbations to construct worst-case virtual state transitions, which does not impact data collection during training and does not require detailed simulator access. In experiments on continuous control tasks with safety constraints, our approach demonstrates robust performance while significantly improving safety at deployment time compared to standard safe reinforcement learning.
\end{abstract}


\section{Introduction}

Deep reinforcement learning (RL) is a data-driven framework for sequential decision making that has demonstrated the ability to solve complex tasks, and represents a promising approach for improving real-world decision making \citep{dulacarnold_2021}. In order for deep RL to be trusted for deployment in real-world decision making settings, however, robustness and safety are of the utmost importance \citep{xu_2022}. As a result, techniques have been developed to incorporate both robustness and safety into deep RL. Robust RL methods protect against worst-case transition models in an uncertainty set \citep{nilim_2005,iyengar_2005}, while safe RL methods incorporate safety constraints into the training process \citep{altman_1999}.

In real-world applications, disturbances in the environment can take many forms that are difficult to model in advance. Therefore, we require methods that can guarantee robust performance and safety under general forms of environment uncertainty. Unfortunately, popular approaches to robustness in deep RL consider very structured forms of uncertainty in order to facilitate efficient implementations. Adversarial methods implement a specific type of perturbation, such as the application of a physical force \citep{pinto_2017} or a change in the action that is deployed \citep{tessler_2019}. Parametric approaches, on the other hand, consider robustness with respect to environment characteristics that can be altered in a simulator \citep{rajeswaran_2017,peng_2018,mankowitz_2020}. When we lack domain knowledge on the structure of potential disturbances, these techniques may not guarantee robust performance and safety.

Another drawback of existing approaches is their need to directly modify the environment during training. Parametric methods assume the ability to generate a range of training environments with a detailed simulator, while adversarial methods directly influence the data collection process by attempting to negatively impact performance. In applications where simulators are inaccurate or unavailable, however, parametric methods cannot be applied and real-world data collection may be required for training. In this context, it is also undesirable to implement adversarial perturbations while interacting in the environment. Therefore, in many real-world domains, we must consider alternative methods for learning safe policies with robustness guarantees.

In this work, we introduce a safe RL framework that provides robustness to general forms of environment disturbances using standard data collection in a \emph{nominal training environment}. We consider robustness over a general uncertainty set defined using the optimal transport cost between transition models, and we show how our framework can be efficiently implemented by applying \emph{Optimal Transport Perturbations} after data collection to construct worst-case \emph{virtual} state transitions. See \figref{fig:otp} for an illustration. These perturbations can be added to the training process of many popular safe RL algorithms to incorporate robustness to unknown disturbances, without harming performance during training or requiring access to a range of simulated training environments. 


\begin{figure}[t]
\centering
\begin{tikzpicture}

\node[regular polygon,regular polygon sides=6,fill=gray!10,draw=black!50,thick,minimum width=4.0cm, minimum height=4.0cm] (uncertainty) at (-4,0) {};

\node (world) at (uncertainty) {\includegraphics[width=0.04\textwidth]{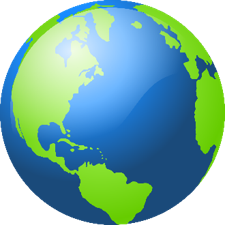}};
\node[left=3ex of world,anchor=west,yshift=3ex] (world_lab) {$\hat{p}_{s,a}$};
\node[xshift=-2.0cm,yshift=1.7cm] (PS_lab) at (world) {$P(\mathcal{S})$};

\node[xshift=0.0cm,yshift=2.75cm,align=center] (PS_title) at (uncertainty) {\textbf{Worst-case transition models}};

\node[right=0.1cm of world,yshift=1.7cm] (world_r) {\includegraphics[width=0.04\textwidth]{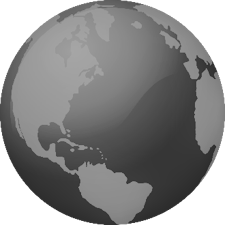}};
\draw[color=snsblue,dashed,ultra thick,->,>=stealth] (world) -- (world_r);
\node[right=0.1cm of world,yshift=-1.7cm] (world_c)  {\includegraphics[width=0.04\textwidth]{Figures/world_blackwhite}};
\draw[color=snsorange,dashed,ultra thick,->,>=stealth] (world) -- (world_c);

\node[right=6.8cm of PS_lab] (S_lab) {$\mathcal{S}$};

\node (sp_hat) at (4,0) {};

\node[xshift=0.0cm,yshift=2.75cm,align=center] (otp_title) at (sp_hat) {\textbf{Optimal Transport Perturbations}};

\node (equiv) at (-0.2,0) {\Large $\iff$};

\node[circle,fill=black,minimum height=0.2cm,minimum width=0.2cm,inner sep=0,left=2.5cm of sp_hat,yshift=-0.75cm] (s) {};
\node[below=3ex of s,anchor=south] (s_lab) {$s$};

\begin{scope}[rotate around = {-45:(sp_hat)}]

\filldraw[color=black!50,fill=gray!10,thick] (sp_hat) ellipse (1.6 and 2.0);

\node[circle,fill=snsblue,minimum height=0.2cm,minimum width=0.2cm,inner sep=0] (sp_r) at ($(sp_hat)+(100:1.6 and 2.0)$) {};
\node[right=9ex of sp_r,anchor=east,yshift=1ex] (spr_lab) {$g_{s,a}^r(\hat{s}')$};

\draw[color=snsblue,dashed,ultra thick,->,>=stealth] (s) -- (sp_r);

\node[circle,fill=snsorange,minimum height=0.2cm,minimum width=0.2cm,inner sep=0] (sp_c) at ($(sp_hat)+(-30:1.6 and 2.0)$) {};
\node[right=8ex of sp_c,anchor=east,yshift=-2ex] (spc_lab) {$g_{s,a}^c(\hat{s}')$};

\draw[color=snsorange,dashed,ultra thick,->,>=stealth] (s) -- (sp_c);

\end{scope}

\node[circle,fill=black,minimum height=0.2cm,minimum width=0.2cm,inner sep=0] at (sp_hat) {};
\node[right=3ex of sp_hat,anchor=east,yshift=1ex] (sphat_lab) {$\hat{s}'$};
\draw[color=black,very thick,->,>=stealth] (s) -- (sp_hat);

\node (TRleft) at (-2,-2.5) {};
\node[right=3ex of TRleft] (TRright) {};
\draw[color=snsblue,ultra thick,dashed] (TRleft) -- (TRright);
\node[right=5ex of TRright,anchor=east] (TR) {$\mathcal{T}^{\pi}_{\mathcal{P},r}$};

\node[right=3ex of TR] (TCleft) {};
\node[right=3ex of TCleft] (TCright) {};
\draw[color=snsorange,ultra thick,dashed] (TCleft) -- (TCright);
\node[right=5ex of TCright,anchor=east] (TC) {$\mathcal{T}^{\pi}_{\mathcal{P},c}$};

\end{tikzpicture}
\caption{Illustration of Optimal Transport Perturbations. Left: Worst-case transition models in the optimal transport uncertainty set $\mathcal{P}_{s,a} \subseteq P(\mathcal{S})$ that correspond to the robust Bellman operators in \eqref{eq:bellman_r} and \eqref{eq:bellman_c}. Right: Tractable reformulation by applying Optimal Transport Perturbations directly in $\mathcal{S}$ to a given next state sample $\hat{s}' \sim \hat{p}_{s,a}$. The black arrow denotes the state transition observed in the nominal environment, and the dashed arrows denote virtual state transitions used only to calculate robust Bellman operators.}\label{fig:otp}
\end{figure}


We summarize our contributions as follows:
\begin{enumerate}
    \item We formulate a safe RL framework that incorporates robustness to general disturbances using the optimal transport cost between transition models.
    \item We show in \thmref{thm:adversarial} that the resulting worst-case optimization problems over transition models can be reformulated as adversarial perturbations to state transitions in the training environment.
    \item We propose an efficient deep RL implementation of our Optimal Transport Perturbations, which we use to construct worst-case virtual state transitions \emph{without impacting data collection during training}.
    \item We demonstrate that the use of Optimal Transport Perturbations leads to robust performance and safety both during training and in the presence of disturbances through experiments on continuous control tasks with safety constraints in the Real-World RL Suite \citep{dulacarnold_2020,dulacarnold_2021}.
\end{enumerate}


\section{Related work}


\subsection{Safe reinforcement learning}

The most common approach to modeling safety in RL is to incorporate constraints on expected total costs in a Constrained Markov Decision Process \citep{altman_1999}, which is the definition of safety we consider in this work. In recent years, several deep RL algorithms have been developed for this framework. A popular approach is to solve the Lagrangian relaxation of the constrained problem \citep{tessler_2019_constrained,ray_2019,stooke_2020}, which is supported by theoretical results establishing that constrained RL has zero duality gap \citep{paternain_2019}. Other approaches to safe RL construct closed-form solutions to guide policy updates \citep{achiam_2017,liu_2022}, or consider immediate switching between the objective and constraints to better satisfy safety during training \citep{xu_2021}. 

A separate line of work focuses on the issue of safe exploration during data collection \citep{brunke_2022}. Control-theoretic approaches focus on avoiding sets of unsafe states through the use of control barrier functions \citep{cheng_2019,emam_2021,ma_2021} or other safety filters \citep{dalal_2018}, but often require additional assumptions on system dynamics. Recently, action correction mechanisms have been applied with unknown dynamics through the use of learned safety critics \citep{srinivasan_2020,bharadhwaj_2021} and recovery policies \citep{thananjeyan_2021,wagener_2021}. We do not directly consider the issue of safe exploration in this work, but it is possible to combine our robust training approach with correction mechanisms for safe exploration.


\subsection{Robust reinforcement learning}

Robust RL methods account for uncertainty in the environment by considering worst-case transition models from an uncertainty set \citep{nilim_2005,iyengar_2005}. In order to scale the robust RL framework to the deep RL setting, most techniques have focused on (i)~parametric uncertainty or (ii)~adversarial training.

Parametric uncertainty methods generate multiple training environments by modifying parameters in a simulator that are often determined based on domain knowledge. Domain randomization \citep{tobin_2017,peng_2018} maximizes average performance over a range of training environments, which has been referred to as a soft-robust approach \citep{derman_2018}. Other methods directly impose a robust perspective towards parametric uncertainty by focusing on the worst-case training environments generated over a range of simulator parameters \citep{rajeswaran_2017,abdullah_2019,mankowitz_2020}. All of these approaches assume access to a simulated version of the real environment, as well as the ability to modify parameters of this simulator.

Adversarial RL methods represent an alternative approach to robustness that introduce perturbations directly into the training process. In order to learn policies that perform well under worst-case disturbances, these perturbations are trained to minimize performance. Deep RL approaches to adversarial training have introduced perturbations in the form of physical forces in the environment \citep{pinto_2017}, as well as adversarial corruptions to actions \citep{tessler_2019,vinitsky_2020} and state observations \citep{mandlekar_2017,zhang_2020,kuang_2022, liu_2023_iclr, liu_2023_icml}. In this work, we learn adversarial perturbations on state transitions, but different from most adversarial RL methods we apply these perturbations to construct virtual transitions \emph{without impacting the data collection process}. 

Finally, safety and robustness have recently been considered together in a unified RL framework. \citet{mankowitz_2021} and \citet{russel_2021} propose a formulation that incorporates robustness into both the objective and constraints in safe RL. We consider this general framework as a starting point for our work.


\section{Preliminaries}


\subsection{Safe reinforcement learning}

Consider an infinite-horizon, discounted Constrained Markov Decision Process (C-MDP) \citep{altman_1999} defined by the tuple $(\mathcal{S},\mathcal{A},p,r,c,\rho_0,\gamma)$, where $\mathcal{S}$ is the set of states, $\mathcal{A}$ is the set of actions, $p: \mathcal{S} \times \mathcal{A} \rightarrow P(\mathcal{S})$ is the transition model where $P(\mathcal{S})$ denotes the space of probability measures over $\mathcal{S}$, $r: \mathcal{S} \times \mathcal{A} \rightarrow \mathbb{R}$ is the reward function, $c: \mathcal{S} \times \mathcal{A} \rightarrow \mathbb{R}$ is the cost function, $\rho_0$ is the initial state distribution, and $\gamma$ is the discount rate.

We model the agent's decisions as a stationary policy $\pi: \mathcal{S} \rightarrow P(\mathcal{A})$. For a given C-MDP and policy $\pi$, the expected total discounted rewards and costs are given by $J_{p,r}(\pi) = \E_{\tau \sim (\pi,p)} \left[ \sum_{t=0}^{\infty} \gamma^t r(s_t,a_t) \right]$ and $J_{p,c}(\pi) = \E_{\tau \sim (\pi,p)} \left[ \sum_{t=0}^{\infty} \gamma^t c(s_t,a_t) \right]$, respectively, where $\tau \sim (\pi,p)$ represents a trajectory sampled according to $s_0 \sim \rho_0$, $a_t \sim \pi(\, \cdot \mid s_t)$, and $s_{t+1} \sim p(\, \cdot \mid s_t,a_t)$. The goal of safe RL is to find a policy $\pi$ that solves the constrained optimization problem
\begin{equation}\label{eq:safe_rl}
\max_{\pi} \,\, J_{p,r}(\pi) \quad \textnormal{s.t.} \quad J_{p,c}(\pi) \leq B,
\end{equation}
where $B$ is a safety budget on expected total discounted costs. 

We denote the state-action value functions (i.e., Q functions) of $\pi$ for a given C-MDP as $Q^{\pi}_{p,r}(s,a)$ and $Q^{\pi}_{p,c}(s,a)$. Off-policy optimization techniques \citep{xu_2021,liu_2022} iteratively optimize \eqref{eq:safe_rl} by considering the related optimization problem
\begin{equation}\label{eq:safe_rl_PO}
\max_{\pi} \,\, \E_{s \sim \mathcal{D}} \left[ \E_{a \sim \pi(\cdot \mid s)} \left[ Q^{\pi_k}_{p,r}(s,a) \right] \right] \quad \textnormal{s.t.} \quad \E_{s \sim \mathcal{D}} \left[ \E_{a \sim \pi(\cdot \mid s)} \left[ Q^{\pi_k}_{p,c}(s,a) \right] \right] \leq B,
\end{equation}
where $\pi_k$ is the current policy and $\mathcal{D}$ is a replay buffer containing data collected during training.


\subsection{Robust and safe reinforcement learning}

We are often interested in finding a policy $\pi$ that achieves strong, safe performance across a range of related environments. In order to accomplish this, \citet{mankowitz_2021} and \citet{russel_2021} propose a Robust Constrained MDP (RC-MDP) framework defined by the tuple $(\mathcal{S},\mathcal{A},\mathcal{P},r,c,\rho_0,\gamma)$, where $\mathcal{P}$ represents an uncertainty set of transition models. We assume $\mathcal{P} = \bigotimes_{(s,a) \in \mathcal{S} \times \mathcal{A}} \mathcal{P}_{s,a}$, where $\mathcal{P}_{s,a}$ is a set of transition models $p_{s,a} = p(\, \cdot \mid s,a) \in P(\mathcal{S})$ at a given state-action pair and $\mathcal{P}$ is the product of these sets. This structure is referred to as rectangularity, and is a common assumption in the literature \citep{nilim_2005,iyengar_2005}.

In order to learn a policy with robust performance and safety across all environments in $\mathcal{P}$, the RC-MDP framework considers a robust version of \eqref{eq:safe_rl} given by
\begin{equation}\label{eq:robust_safe_rl}
\max_{\pi} \,\, \inf_{p \in \mathcal{P}} J_{p,r}(\pi) \quad \textnormal{s.t.} \quad \sup_{p \in \mathcal{P}} J_{p,c}(\pi) \leq B.
\end{equation}
\begin{remark}\label{remark:rcmdp_guarantees}
A policy $\pi$ that solves \eqref{eq:robust_safe_rl} satisfies the safety constraint across all transition models in $\mathcal{P}$ (i.e., achieves robust safety). Within this set of feasible policies, a policy $\pi$ that solves \eqref{eq:robust_safe_rl} maximizes the worst-case performance across all transition models in $\mathcal{P}$ (i.e., achieves robust performance).
\end{remark}
We are interested in solving \eqref{eq:robust_safe_rl} to find a policy $\pi$ with robust performance and safety as described in \remref{remark:rcmdp_guarantees}. This is in contrast to solving the standard safe RL problem in \eqref{eq:safe_rl}, which only considers performance and safety in a single environment. As in the standard safe RL setting, we can apply off-policy optimization techniques to iteratively optimize \eqref{eq:robust_safe_rl} by considering the related optimization problem
\begin{equation}\label{eq:robust_safe_rl_PO}
\max_{\pi} \,\, \E_{s \sim \mathcal{D}} \left[ \E_{a \sim \pi(\cdot \mid s)} \left[ Q^{\pi_k}_{\mathcal{P},r}(s,a) \right] \right] \quad \textnormal{s.t.} \quad \E_{s \sim \mathcal{D}} \left[ \E_{a \sim \pi(\cdot \mid s)} \left[ Q^{\pi_k}_{\mathcal{P},c}(s,a) \right] \right] \leq B,
\end{equation}
where $Q^{\pi}_{\mathcal{P},r}(s,a)$ and $Q^{\pi}_{\mathcal{P},c}(s,a)$ represent robust Q functions. In this work, we focus on how to efficiently learn the robust Q functions that are needed for the robust and safe policy update in \eqref{eq:robust_safe_rl_PO}.


\subsection{Robust Bellman operators}

Compared to the standard safe RL update in \eqref{eq:safe_rl_PO}, the only difference in the robust and safe RL update of \eqref{eq:robust_safe_rl_PO} comes from the use of robust Q functions. Therefore, in order to incorporate robustness into existing deep safe RL algorithms, we must be able to efficiently learn $Q^{\pi}_{\mathcal{P},r}(s,a)$ and $Q^{\pi}_{\mathcal{P},c}(s,a)$. These robust Q functions represent the unique fixed points of the corresponding robust Bellman operators \citep{nilim_2005,iyengar_2005}
\begin{align}
\mathcal{T}^{\pi}_{\mathcal{P},r} Q_r(s,a) &:= r(s,a) + \gamma \inf_{p_{s,a} \in \mathcal{P}_{s,a}} \E_{s' \sim p_{s,a}} \left[ V_r^{\pi}(s') \right],\label{eq:bellman_r} \\
\mathcal{T}^{\pi}_{\mathcal{P},c} Q_c(s,a) &:= c(s,a) + \gamma \sup_{p_{s,a} \in \mathcal{P}_{s,a}} \E_{s' \sim p_{s,a}} \left[ V_c^{\pi}(s') \right],\label{eq:bellman_c}
\end{align}
where we write $V_r^{\pi}(s') = \E_{a' \sim \pi(\cdot \mid s')} \left[ Q_r(s',a') \right]$ and $V_c^{\pi}(s') = \E_{a' \sim \pi(\cdot \mid s')} \left[ Q_c(s',a') \right]$. Note that $\mathcal{T}^{\pi}_{\mathcal{P},r}$ and $\mathcal{T}^{\pi}_{\mathcal{P},c}$ are contraction operators \citep{nilim_2005,iyengar_2005}, so we can apply standard temporal difference learning techniques to learn $Q^{\pi}_{\mathcal{P},r}(s,a)$ and $Q^{\pi}_{\mathcal{P},c}(s,a)$. In order to do so, we must be able to calculate the Bellman targets in \eqref{eq:bellman_r} and \eqref{eq:bellman_c}, which involve worst-case optimization problems over transition models that depend on the choice of uncertainty set $\mathcal{P}_{s,a}$ at every state-action pair. In order to estimate these Bellman targets, popular choices of $\mathcal{P}_{s,a}$ in the literature require the ability to change physical parameters of the environment \citep{peng_2018} or directly apply adversarial perturbations during trajectory rollouts \citep{tessler_2019} to calculate worst-case transitions. However, because these implementations rely on multiple simulated training environments or potentially dangerous adversarial interventions, they are not compatible with settings that require real-world data collection for training.


\section{Optimal transport uncertainty set}

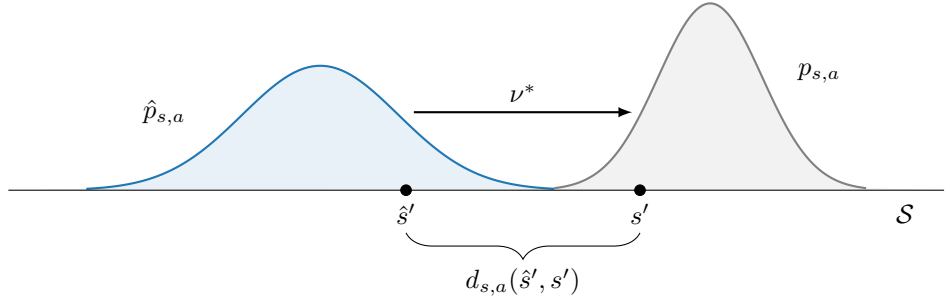
\begin{figure}[t]
\centering
\begin{tikzpicture}
\begin{axis}[
domain=-3:3, samples=100,ymin=-0.3,
width=1.0\textwidth,
height=5.0cm,
axis lines=none,
    ]   
   \addplot [draw=snsblue,thick,name path = start,domain=-2.5:0.5] {gauss(-1,0.5)};
   \addplot [draw=black!50,thick,name path = end,domain=0.5:2.5] {gauss(1.5,0.333)};
   \addplot [draw=black,name path=bot] {0};
   \addplot[snsblue!10] fill between[of=bot and start];
   \addplot[gray!10] fill between[of=bot and end];

   \node (startlabel) at (axis cs:-2,0.5) {$\hat{p}_{s,a}$};   
   \node (endlabel) at (axis cs:2.2,0.75) {$p_{s,a}$};   
   
   \draw[-latex,thick] (axis cs:-0.4,0.5) to node[midway,above] {$\nu^*$} (axis cs:1.0,0.5);
   
   \addplot [fill=black,mark=*,mark size=2pt] coordinates {(-0.45,0)};
   \node (s0) at (axis cs:-0.45,-0.15) {$\hat{s}'$};

   \addplot [fill=black,mark=*,mark size=2pt] coordinates {(1.05,0)};
   \node (s1) at (axis cs:1.05,-0.15) {$s'$};

   \node (slabel) at (axis cs:2.75,-0.15) {$\mathcal{S}$};
   
\end{axis}

\draw[decorate,decoration={brace,mirror,amplitude=10pt}] (s0.south) -- (s1.south) node [black,midway,below=10pt] {$d_{s,a} ( \hat{s}', s' )$};

\node[xshift=0.75in] (phantom) at (slabel) {};

\end{tikzpicture}
\caption{Illustration of optimal transport cost between transition models $\hat{p}_{s,a}, p_{s,a} \in P(\mathcal{S})$. Informally, $\textnormal{OTC}_{d_{s,a}} ( \hat{p}_{s,a}, p_{s,a} )$ represents the cost of transporting the probability mass of $\hat{p}_{s,a}$ to $p_{s,a}$ using the optimal (i.e., minimum cost) transport plan $\nu^*$, where the transport cost is determined by $d_{s,a}$.}\label{fig:otc}
\end{figure}

In this work, we consider an uncertainty set based on optimal transport cost, and we show that this choice of uncertainty set leads to an efficient implementation of the worst-case optimization problems in \eqref{eq:bellman_r} and \eqref{eq:bellman_c}. In order to do so, we assume that $\mathcal{S}$ is a Polish space (i.e., a separable, completely metrizable topological space). Note that the Euclidean space $\mathbb{R}^n$ is Polish, so this is not very restrictive. Next, we define $\mathcal{P}_{s,a}$ using the optimal transport cost between transition models.
\begin{definition}[Optimal transport cost]\label{def:otc}
Let $\mathcal{S}$ be a Polish space, and let $d_{s,a}: \mathcal{S} \times \mathcal{S} \rightarrow \mathbb{R}_+$ be a non-negative, lower semicontinuous function satisfying $d_{s,a}(s', s')=0$ for all $s' \in \mathcal{S}$. Then, the \emph{optimal transport cost} between two transition models $\hat{p}_{s,a}, p_{s,a} \in P(\mathcal{S})$ is defined as
\begin{equation*}
\textnormal{OTC}_{d_{s,a}} ( \hat{p}_{s,a}, p_{s,a} ) = \inf_{\nu \in \Gamma \left( \hat{p}_{s,a}, p_{s,a} \right)} \int_{\mathcal{S} \times \mathcal{S}} d_{s,a} ( \hat{s}', s' ) \textnormal{d}\nu ( \hat{s}', s' ),
\end{equation*}
where $\Gamma ( \hat{p}_{s,a}, p_{s,a} )$ is the set of all couplings of $\hat{p}_{s,a}$ and $p_{s,a}$. 
\end{definition}
See \figref{fig:otc} for an illustration of \defref{def:otc}. For a given transport cost function $d_{s,a}$, $\textnormal{OTC}_{d_{s,a}} ( \hat{p}_{s,a}, p_{s,a} )$ represents the minimum cost required to transport the probability mass of $\hat{p}_{s,a}$ to $p_{s,a}$, and the optimal coupling $\nu^*$ describes the transport plan that achieves this minimum cost. If $d_{s,a}$ is chosen to be a metric raised to some power $p \geq 1$, we recover the $p$-Wasserstein distance raised to the power $p$ as a special case \citep{chen_2020}. If we let $d_{s,a} ( \hat{s}', s' ) = \mathbf{1}_{\hat{s}' \neq s'}$, we recover the total variation distance as a special case \citep{villani_2008}.

By considering the optimal transport cost from some nominal transition model $\hat{p}_{s,a}$, we define the optimal transport uncertainty set as follows.
\begin{definition}[Optimal transport uncertainty set]\label{def:otp_uncertainty}
For a given nominal transition model $\hat{p}_{s,a}$, transport cost function $d_{s,a}$, and radius $\epsilon_{s,a}$ at $(s,a) \in \mathcal{S} \times \mathcal{A}$, the \emph{optimal transport uncertainty set} is defined as
\begin{equation*}
    \mathcal{P}_{s,a} = \left\lbrace p_{s,a} \in P(\mathcal{S}) \mid \textnormal{OTC}_{d_{s,a}}(\hat{p}_{s,a},p_{s,a}) \leq \epsilon_{s,a} \right\rbrace.
\end{equation*}
\end{definition}
This uncertainty set has previously been considered in robust RL for the special case of the Wasserstein distance \citep{abdullah_2019,hou_2020,kuang_2022}. As we will show in the following sections, the use of an optimal transport uncertainty set leads to an efficient, model-free implementation of robust and safe RL \emph{that only requires the ability to collect data in a nominal environment}.

Note that the optimal transport uncertainty set in \defref{def:otp_uncertainty} can be applied across a variety of domains. Optimal transport cost remains valid for distributions that do not share the same support, unlike other popular measures between distributions such as the Kullback-Leibler divergence. As a result, the optimal transport uncertainty set is very general and can be applied to both stochastic and deterministic transition models. Optimal transport cost also accounts for the relationship between states in $\mathcal{S}$ through the function $d_{s,a}$, and allows significant flexibility in the choice of $d_{s,a}$. This includes threshold-based binary comparisons \citep{pydi_2020} and percentage-based comparisons between states that are not metrics or pseudo-metrics. To demonstrate the flexibility of our approach for the common setting where $\mathcal{S} = \mathbb{R}^n$, we consider in our experiments a percentage-based comparison of state transitions given by
\begin{equation}\label{eq:d}
    d_{s,a} ( \hat{s}', s' ) = \frac{1}{n} \sum_{i=1}^n \left( \frac{s_i' - s_i}{\hat{s}_i' - s_i} - 1 \right)^2,
\end{equation}
with the convention that $0/0=1$ so that $d_{s,a}(s', s')=0$ for all $s' \in \mathcal{S}$. Note that \eqref{eq:d} satisfies all requirements in \defref{def:otc}, and is not a metric or pseudo-metric. In more general settings, we can choose $d_{s,a}$ based on the application to reflect the geometry of $\mathcal{S}$ in a meaningful way.


\section{Reformulation as worst-case virtual state transitions}

We are interested in solving the robust and safe RL problem in \eqref{eq:robust_safe_rl} for an optimal transport uncertainty set, which we can accomplish by applying the policy updates given by \eqref{eq:robust_safe_rl_PO}. In order to learn the robust Q~functions that appear in \eqref{eq:robust_safe_rl_PO} for an optimal transport uncertainty set, we focus on how to efficiently calculate the robust Bellman operators in \eqref{eq:bellman_r} and \eqref{eq:bellman_c}. We consider the following main assumptions.

\begin{assumption}\label{ass:obj_cont}
For any $\pi$ and $Q_r(s',a')$ in \eqref{eq:bellman_r}, $V_r^{\pi}(s') = \E_{a' \sim \pi(\cdot \mid s')} \left[ Q_r(s',a') \right]$ is lower semicontinuous and $\Ein_{s' \sim \hat{p}_{s,a}} | V_r^{\pi}(s') | < \infty$. For any $\pi$ and $Q_c(s',a')$ in \eqref{eq:bellman_c}, $V_c^{\pi}(s') = \E_{a' \sim \pi(\cdot \mid s')} \left[ Q_c(s',a') \right]$ is upper semicontinuous and $\Ein_{s' \sim \hat{p}_{s,a}} | V_c^{\pi}(s') | < \infty$.
\end{assumption}

\begin{assumption}\label{ass:transport}
Optimal transport plans exist for the worst-case optimization problems over transition models in \eqref{eq:bellman_r} and \eqref{eq:bellman_c}.
\end{assumption}

Note that \assrefrange{ass:obj_cont}{ass:transport} correspond to assumptions in \citet{blanchet_2019} applied to our setting. In practice, the use of neural network representations results in continuous value functions, which are bounded for the common case when rewards and costs are bounded, respectively. A sufficient condition for \assref{ass:transport} to hold is if $\mathcal{S}$ is compact, or if we restrict our attention to a compact subset of next states in our definition of $\mathcal{P}_{s,a}$ which is reasonable in practice. \citet{blanchet_2019} also provide other sufficient conditions for \assref{ass:transport} to hold.

Under these assumptions, we can reformulate the robust Bellman operators in \eqref{eq:bellman_r} and \eqref{eq:bellman_c} to arrive at a tractable result that can be efficiently implemented in a deep RL setting.
\begin{theorem}\label{thm:adversarial}
Let \assrefrange{ass:obj_cont}{ass:transport} hold, and let $\mathcal{G}$ be the set of all functions from $\mathcal{S}$ to $\mathcal{S}$. Then, we have
\begin{align}
\mathcal{T}^{\pi}_{\mathcal{P},r} Q_r(s,a) &= r(s,a) + \gamma \E_{\hat{s}' \sim \hat{p}_{s,a}} \left[ V_r^{\pi}(g^{r}_{s,a}(\hat{s}')) \right], \label{eq:closed_r} \\
\mathcal{T}^{\pi}_{\mathcal{P},c} Q_c(s,a) &= c(s,a) + \gamma \E_{\hat{s}' \sim \hat{p}_{s,a}} \left[ V_c^{\pi}(g^{c}_{s,a}(\hat{s}')) \right], \label{eq:closed_c}
\end{align}
where for a given state-action pair $(s,a) \in \mathcal{S} \times \mathcal{A}$ we have
\begin{align}
g^{r}_{s,a} \in \arg \, \min_{g \in \mathcal{G}} \,\, \E_{\hat{s}' \sim \hat{p}_{s,a}} \left[ V_r^{\pi}(g(\hat{s}')) \right] \quad &\textnormal{s.t.} \quad \E_{\hat{s}' \sim \hat{p}_{s,a}} \left[ d_{s,a}(\hat{s}',g(\hat{s}')) \right] \leq \epsilon_{s,a}, \label{eq:adversarial_r} \\
g^{c}_{s,a} \in \arg \, \max_{g \in \mathcal{G}} \,\, \E_{\hat{s}' \sim \hat{p}_{s,a}} \left[ V_c^{\pi}(g(\hat{s}')) \right] \quad &\textnormal{s.t.} \quad \E_{\hat{s}' \sim \hat{p}_{s,a}} \left[ d_{s,a}(\hat{s}',g(\hat{s}')) \right] \leq \epsilon_{s,a}. \label{eq:adversarial_c}
\end{align}
\end{theorem}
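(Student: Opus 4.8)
The plan is to prove the reward case \eqref{eq:closed_r}--\eqref{eq:adversarial_r}; the cost case \eqref{eq:closed_c}--\eqref{eq:adversarial_c} is identical after replacing the infimum by a supremum and lower semicontinuity by upper semicontinuity. Since $r(s,a)$ is merely an additive constant, everything reduces to the inner worst-case problem $\inf_{p_{s,a} \in \mathcal{P}_{s,a}} \E_{s' \sim p_{s,a}}[V_r^\pi(s')]$, which is a distributionally robust optimization over an optimal-transport ball of exactly the type studied by \citet{blanchet_2019}. I would establish the claim in two steps: first rewrite the worst-case problem over transition models as an optimization over couplings with a fixed first marginal, and then show that this coupling problem can be restricted, without changing its value, to deterministic transport maps $g \in \mathcal{G}$.

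For the first step, I would prove that $\inf_{p_{s,a} \in \mathcal{P}_{s,a}} \E_{s' \sim p_{s,a}}[V_r^\pi(s')]$ equals $\inf_\nu \E_{(\hat s', s') \sim \nu}[V_r^\pi(s')]$, where the infimum is over all couplings $\nu$ on $\mathcal{S} \times \mathcal{S}$ whose first marginal is $\hat p_{s,a}$ and which satisfy $\int d_{s,a}\, \mathrm{d}\nu \le \epsilon_{s,a}$. One inequality is immediate: given a feasible $p_{s,a}$, \assref{ass:transport} supplies an optimal coupling $\nu^*$ of $\hat p_{s,a}$ and $p_{s,a}$ with $\int d_{s,a}\,\mathrm{d}\nu^* = \mathrm{OTC}_{d_{s,a}}(\hat p_{s,a}, p_{s,a}) \le \epsilon_{s,a}$, so $\nu^*$ is feasible for the coupling problem and has the same objective value because the integrand depends only on $s'$. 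The reverse inequality is equally direct: any feasible $\nu$ has a second marginal $p_{s,a}$ with $\mathrm{OTC}_{d_{s,a}}(\hat p_{s,a}, p_{s,a}) \le \int d_{s,a}\,\mathrm{d}\nu \le \epsilon_{s,a}$ (the optimal transport cost is an infimum over couplings), so $p_{s,a} \in \mathcal{P}_{s,a}$ attains the same objective value.

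The second step is the main obstacle, since it amounts to a Monge-type derandomization: I must show that allowing randomized couplings cannot beat the best deterministic map. Because every feasible map $g$ induces the feasible coupling $\hat p_{s,a}(\mathrm{d}\hat s')\,\delta_{g(\hat s')}(\mathrm{d}s')$ with identical objective, the infimum over maps is at least the infimum over couplings; the content is the opposite inequality. Here I would dualize the single cost constraint with a multiplier $\lambda \ge 0$ and use a measurable selection argument to interchange the pointwise infimum with the expectation, writing $\inf_{g \in \mathcal{G}} \E_{\hat s' \sim \hat p_{s,a}}[V_r^\pi(g(\hat s')) + \lambda\, d_{s,a}(\hat s', g(\hat s'))] = \E_{\hat s' \sim \hat p_{s,a}}[\inf_{s' \in \mathcal{S}}\{V_r^\pi(s') + \lambda\, d_{s,a}(\hat s', s')\}]$. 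This makes the Lagrangian dual of the map problem coincide with the Blanchet--Murthy dual of the coupling problem from the first step, for which \citet{blanchet_2019} establish strong duality under \assrefrange{ass:obj_cont}{ass:transport}. The lower semicontinuity of $V_r^\pi$ and $d_{s,a}$ (\assref{ass:obj_cont}, \defref{def:otc}) together with \assref{ass:transport} guarantee that the inner pointwise minimization is attained along a measurable selector $g^r_{s,a}$, and complementary slackness at the optimal multiplier forces $g^r_{s,a}$ to be feasible and to attain the common value.

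Chaining the two steps gives $\inf_{p_{s,a} \in \mathcal{P}_{s,a}} \E_{s' \sim p_{s,a}}[V_r^\pi(s')] = \E_{\hat s' \sim \hat p_{s,a}}[V_r^\pi(g^r_{s,a}(\hat s'))]$ with $g^r_{s,a}$ solving \eqref{eq:adversarial_r}, which is exactly \eqref{eq:closed_r}; adding back $r(s,a)$ completes the reward case, and repeating the argument with the supremum and upper semicontinuity yields \eqref{eq:closed_c}--\eqref{eq:adversarial_c}. I expect the delicate points to lie entirely in the second step: justifying the measurable selection so that the pointwise optimizer is a genuine function in $\mathcal{G}$, and invoking strong duality to rule out a gap between randomized couplings and deterministic maps.
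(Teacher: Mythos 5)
Your proposal follows essentially the same route as the paper's proof: both pass through the Blanchet--Murthy strong duality for the worst-case problem over transition models, identify its dual with the Lagrangian dual of the perturbation-map problem by interchanging the optimization over $g \in \mathcal{G}$ with the expectation, and then invoke Theorem~1(b) of \citet{blanchet_2019} together with \assref{ass:transport} to obtain the optimal multiplier $\lambda^*$ and a pointwise optimizer $g^*_{s,a}$ whose feasibility (via primal feasibility of the optimal transport plan) and complementary slackness close the duality gap for the map problem. Your explicit intermediate reformulation as a coupling problem with fixed first marginal is already implicit in the Blanchet--Murthy setup, and your identification of the delicate points (measurable selection and the derandomization from couplings to maps) matches exactly where the paper leans on that external result.
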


\begin{proof}
First, we leverage results from \citet{blanchet_2019} to show that optimal transport strong duality holds for the worst-case optimization problems over transition models in \eqref{eq:bellman_r} and \eqref{eq:bellman_c} under \assref{ass:obj_cont}. Next, we establish that the optimization problems in \eqref{eq:bellman_r} and \eqref{eq:bellman_c} share the same dual problems as \eqref{eq:adversarial_r} and \eqref{eq:adversarial_c}, respectively. Finally, we use \assref{ass:transport} to show that strong duality also holds for \eqref{eq:adversarial_r} and \eqref{eq:adversarial_c}, which proves the result. See the Appendix for details.
\end{proof}

For the optimization problems in \eqref{eq:bellman_r} and \eqref{eq:bellman_c}, note that the optimal transport plan $\nu^*$ from the nominal transition model $\hat{p}_{s,a}$ to the \emph{worst-case} transition model in $\mathcal{P}_{s,a}$ can be characterized by a deterministic perturbation function in state space \citep{blanchet_2019}. Utilizing this structure, \thmref{thm:adversarial} demonstrates that we can calculate the robust Bellman operators $\mathcal{T}^{\pi}_{\mathcal{P},r}$ and $\mathcal{T}^{\pi}_{\mathcal{P},c}$ by using samples collected from $\hat{p}_{s,a}$, and adversarially perturbing the next state samples according to \eqref{eq:adversarial_r} and \eqref{eq:adversarial_c}, respectively. We refer to the resulting changes in state transitions as \emph{Optimal Transport Perturbations (OTP)}. As a result, we have replaced difficult optimization problems over transition models in \eqref{eq:bellman_r} and \eqref{eq:bellman_c} with the tractable problems of computing Optimal Transport Perturbations in state space. See \figref{fig:otp} for an illustration. \thmref{thm:adversarial} represents the main theoretical contribution of our work, which directly motivates an efficient deep RL implementation of robust and safe RL.

Finally, note that these perturbed state transitions are only needed to calculate the Bellman targets in \eqref{eq:closed_r} and \eqref{eq:closed_c}, which we use to train the robust Q functions $Q^{\pi}_{\mathcal{P},r}(s,a)$ and $Q^{\pi}_{\mathcal{P},c}(s,a)$. Therefore, unlike other adversarial approaches to robust RL \citep{pinto_2017,tessler_2019,vinitsky_2020}, our Optimal Transport Perturbations have no impact on trajectory rollouts in the nominal training environment. Instead, these perturbations are applied \emph{after} data collection to construct worst-case \emph{virtual} state transitions.


\section{Perturbation networks for deep reinforcement learning}

From \thmref{thm:adversarial}, we can calculate Bellman targets for our robust Q functions $Q^{\pi}_{\mathcal{P},r}(s,a)$ and $Q^{\pi}_{\mathcal{P},c}(s,a)$ by considering adversarially perturbed versions of next states sampled from $\hat{p}_{s,a}$. We can construct these adversarial perturbations by solving \eqref{eq:adversarial_r} and \eqref{eq:adversarial_c}, respectively. Note that we can combine the perturbation functions $g^{r}_{s,a}, g^{c}_{s,a} \in \mathcal{G}$ from \thmref{thm:adversarial} across state-action pairs by including $(s,a)$ as input for context, in addition to the next state $\hat{s}'$ to be perturbed. This leads to the combined perturbation functions $g^{r}, g^{c}: \mathcal{S} \times \mathcal{A} \times \mathcal{S} \rightarrow \mathcal{S}$, where
$g^r(s,a,\hat{s}') = g^{r}_{s,a}(\hat{s}')$ and $g^c(s,a,\hat{s}') = g^{c}_{s,a}(\hat{s}')$. We let $\mathcal{F}$ be the set of all functions from $\mathcal{S} \times \mathcal{A} \times \mathcal{S}$ to $\mathcal{S}$, with $g^{r}, g^{c} \in \mathcal{F}$.

In the context of deep RL, we consider a class of perturbation functions $\mathcal{F}_{\delta} \subseteq \mathcal{F}$ parameterized by a neural network $\delta: \mathcal{S} \times \mathcal{A} \times \mathcal{S} \rightarrow \mathcal{S}$. In our experiments, we consider tasks where $\mathcal{S} = \mathbb{R}^n$ and we apply multiplicative perturbations to state transitions. In particular, we consider perturbation functions of the form
\begin{equation}\label{eq:otp_implementation}
g_{\delta}(s,a,\hat{s}') = s + (\hat{s}' - s) (1 + \delta(s,a,\hat{s}')),
\end{equation}
where $\delta(s,a,\hat{s}') \in \mathbb{R}^n$ and all operations are performed per-coordinate. By defining $\mathcal{F}_{\delta}$ in this way, we obtain plausible adversarial transitions that are interpretable, where $\delta(s,a,\hat{s}')$ represents the percentage change to the nominal state transition in each coordinate.

Using $d_{s,a}$ from \eqref{eq:d}, we have that
\begin{equation*}
d_{s,a}(\hat{s}',g_{\delta}(s,a,\hat{s}')) = \frac{1}{n} \Vert \delta(s,a,\hat{s}') \Vert_2^2.
\end{equation*}
Then, following common practice in off-policy deep RL, we combine the perturbation function updates in \eqref{eq:adversarial_r} and \eqref{eq:adversarial_c} across state-action pairs by averaging over samples from the replay buffer. By doing so, we can efficiently update our perturbation networks in a deep RL setting according to
\begin{align}
\delta_r \in \arg \, \min_{\delta} \,\, \E_{(s,a,\hat{s}') \sim \mathcal{D}} \left[ V_r^{\pi}(g_{\delta}(s,a,\hat{s}')) \right] \quad &\textnormal{s.t.} \quad \E_{(s,a,\hat{s}') \sim \mathcal{D}} \left[ \Vert \delta(s,a,\hat{s}') \Vert_2^2 \right]  \leq n \epsilon_{\delta}^2, \label{eq:deep_delta_r} \\
\delta_c \in \arg \, \max_{\delta} \,\, \E_{(s,a,\hat{s}') \sim \mathcal{D}} \left[ V_c^{\pi}(g_{\delta}(s,a,\hat{s}')) \right] \quad &\textnormal{s.t.} \quad \E_{(s,a,\hat{s}') \sim \mathcal{D}} \left[ \Vert \delta(s,a,\hat{s}') \Vert_2^2 \right]  \leq n \epsilon_{\delta}^2, \label{eq:deep_delta_c}
\end{align}
where $(s,a,\hat{s}') \sim \mathcal{D}$ are transitions collected in the nominal environment and $\epsilon_{\delta}$ represents the average per-coordinate magnitude of $\delta(s,a,\hat{s}')$ with $\E_{(s,a) \sim \mathcal{D}} \left[ \epsilon_{s,a} \right] = \epsilon_{\delta}^2$. In practice, we apply gradient-based updates to the Lagrangian relaxations of \eqref{eq:deep_delta_r} and \eqref{eq:deep_delta_c}, and we update the corresponding dual variables throughout training. It is also possible to satisfy perturbation function constraints at every state-action pair through the use of clipping mechanisms, if desired. Note that any violation of the perturbation function constraints in practice will only lead to additional robustness and will not negatively impact safety.

We train separate reward and cost perturbation networks $\delta_{r}$ and $\delta_{c}$, and we apply the resulting Optimal Transport Perturbations to calculate the Bellman targets in \eqref{eq:closed_r} and \eqref{eq:closed_c} for training the robust Q functions $Q^{\pi}_{\mathcal{P},r}(s,a)$ and $Q^{\pi}_{\mathcal{P},c}(s,a)$. For $(s,a,\hat{s}') \sim \mathcal{D}$, we consider the sample-based estimates
\begin{align}
\hat{\mathcal{T}}^{\pi}_{\mathcal{P},r} Q_r(s,a) &= r(s,a) + \gamma V_r^{\pi}(g_{\delta_r}(s,a,\hat{s}')),\label{eq:bellman_sample_r} \\
\hat{\mathcal{T}}^{\pi}_{\mathcal{P},c} Q_c(s,a) &= c(s,a) + \gamma V_c^{\pi}(g_{\delta_c}(s,a,\hat{s}')).\label{eq:bellman_sample_c}
\end{align}


\section{Algorithm}


\begin{algorithm}[t]
\caption{Safe RL with Optimal Transport Perturbations}\label{alg:safe_rl_otp}
\KwIn{initial policy $\pi_0$; critics $Q_{\theta_r}, Q_{\theta_c}$; OTP networks $\delta_r, \delta_c$.}
\BlankLine
\For{$k=0,1,2,\ldots$}{
	\BlankLine
	Collect data $\tau \sim (\pi_k,\hat{p})$ and store it in $\mathcal{D}$.
	\BlankLine
	\For{$K$ updates}{
	\BlankLine
	Sample a batch of data $(s,a,r,c,\hat{s}') \sim \mathcal{D}$.
	\BlankLine
	Update OTP networks $\delta_r, \delta_c$ according to \eqref{eq:deep_delta_r} and \eqref{eq:deep_delta_c}.
	\BlankLine
	Calculate Bellman targets in \eqref{eq:bellman_sample_r} and \eqref{eq:bellman_sample_c}, and update critics $Q_{\theta_r}, Q_{\theta_c}$ to minimize $\mathcal{L}_{\mathcal{P},r}(\theta_r), \mathcal{L}_{\mathcal{P},c}(\theta_c)$.
	\BlankLine
	Update policy $\pi$ according to \eqref{eq:robust_safe_rl_PO}.
	\BlankLine
	}
	\BlankLine
}
\end{algorithm}


We summarize our approach to robust and safe RL in \algref{alg:safe_rl_otp}. At every update, we sample previously collected data from a replay buffer $\mathcal{D}$. We update our reward and cost perturbation networks $\delta_{r}$ and $\delta_{c}$ according to \eqref{eq:deep_delta_r} and \eqref{eq:deep_delta_c}, respectively. Then, we estimate Bellman targets according to \eqref{eq:bellman_sample_r} and \eqref{eq:bellman_sample_c}, which we use to update our critics via standard temporal difference learning loss functions. We consider parameterized critics $Q_{\theta_r}$ and $Q_{\theta_c}$, and we optimize their parameters to minimize the loss functions 
\begin{align*}
    \mathcal{L}_{\mathcal{P},r}(\theta_r) &= \E_{(s,a,\hat{s}') \sim \mathcal{D}} \left[ \left( Q_{\theta_r}(s,a) - \hat{\mathcal{T}}^{\pi}_{\mathcal{P},r}\bar{Q}_{\theta_r}(s,a)  \right)^2 \right], \\
    \mathcal{L}_{\mathcal{P},c}(\theta_c) &= \E_{(s,a,\hat{s}') \sim \mathcal{D}} \left[ \left( Q_{\theta_c}(s,a) - \hat{\mathcal{T}}^{\pi}_{\mathcal{P},c}\bar{Q}_{\theta_c}(s,a)  \right)^2 \right],
\end{align*}
where $\bar{Q}_{\theta_r}$ and $\bar{Q}_{\theta_c}$ represent target critic networks. Finally, we use these critic estimates to update our policy according to \eqref{eq:robust_safe_rl_PO}.

Compared to standard safe RL methods, the only additional components of our approach are the perturbation networks used to apply Optimal Transport Perturbations, which we train alongside the critics and the policy using standard gradient-based methods. Otherwise, the computations for updating the critics and policy remain unchanged. Therefore, it is simple to incorporate our OTP framework into existing deep safe RL algorithms in order to provide robustness guarantees on performance and safety.


\section{Experiments}\label{sec:experiments}


\begin{table}[t]
\caption{Aggregate performance summary}
\label{tab:experiments}
\begin{center}
\begin{tabular}{L{0.30} C{0.09} C{0.09} C{0.09} C{0.09} C{0.09} }
\toprule
          &         & \multicolumn{2}{M{0.18}{1}}{Normalized Ave.\textsuperscript{$\ddag$}} & \multicolumn{2}{M{0.18}{1}}{Rollouts Require\textsuperscript{$\S$}} \\ \cmidrule(lr){3-4} \cmidrule(lr){5-6} 
Algorithm & \% Safe\textsuperscript{$\dag$} & Reward   & Cost & Adversary & Simulator \\
\midrule
Safe RL                       & \sigspace51\%\sig & \sigspace1.00\sig & \sigspace1.00\sig & No & No \\
\textbf{OTP}                  & \sigspace\textbf{87\%}\sigspace & \sigspace\textbf{1.06}\sigspace & \sigspace\textbf{0.34}\sigspace & \textbf{No} & \textbf{No} \\
Adversarial RL (10\%)         & \sigspace88\%\sigspace & \sigspace0.95\sig & \sigspace0.28\sigspace & Yes & No \\
Adversarial RL (5\%)          & \sigspace82\%\sig & \sigspace1.05\sigspace & \sigspace0.48\sig & Yes & No \\
Domain Randomization          & \sigspace76\%\sig & \sigspace1.14\sig & \sigspace0.72\sig & No & Yes \\
Domain Randomization (OOD)    & \sigspace55\%\sig & \sigspace1.02\sigspace & \sigspace1.02\sig & No & Yes \\
Risk-Averse Model Uncertainty    & \sigspace80\%\sig & \sigspace1.08\sigspace & \sigspace0.51\sig & No & No \\
\midrule
\multicolumn{6}{l}{\footnotesize{\textsuperscript{$\dag$} Percentage of policies that satisfy the safety constraint across all tasks and test environments.}} \\
\multicolumn{6}{l}{\footnotesize{\textsuperscript{$\ddag$} Normalized relative to the average performance of standard safe RL for each task and test environment.}} \\
\multicolumn{6}{l}{\footnotesize{\textsuperscript{$\S$} Denotes need for adversary or simulator during data collection (i.e., trajectory rollouts) for training.}} \\
\multicolumn{6}{l}{\footnotesize{\textsuperscript{$\ast$} Statistically significant difference ($p<0.05$) compared to OTP using a paired t-test.}} \\
\bottomrule
\end{tabular}
\end{center}
\end{table}


We analyze the use of Optimal Transport Perturbations for robust and safe RL on continuous control tasks with safety constraints in the Real-World RL Suite \citep{dulacarnold_2020,dulacarnold_2021}. We follow the same experimental design used in \citet{queeney_2023}. In particular, we consider 5 constrained tasks over 3 domains  (Cartpole Swingup, Walker Walk, Walker Run, Quadruped Walk, and Quadruped Run), which all have horizons of $1{,}000$ with $r(s,a) \in [0,1]$ and $c(s,a) \in \left\lbrace 0,1 \right\rbrace$. In all tasks, we consider a safety budget of $B=100$. We train policies in a nominal training environment for 1 million steps over 5 random seeds, and we evaluate the robustness of the learned policies in terms of both performance and safety across a range of perturbed test environments. See the Appendix for details on the safety constraints and environment perturbations considered for each task.

Our Optimal Transport Perturbations can be combined with many popular safe RL algorithms that perform policy updates according to \eqref{eq:safe_rl_PO} by replacing standard Q functions with robust Q functions as in \eqref{eq:robust_safe_rl_PO}, which is a benefit of our methodology. In our experiments, we consider the safe RL algorithm Constraint-Rectified Policy Optimization (CRPO) \citep{xu_2021}, and we use the unconstrained deep RL algorithm Maximum a Posteriori Policy Optimization (MPO) \citep{abdolmaleki_2018} to calculate policy updates in CRPO. For a fair comparison, we apply CRPO with MPO policy updates as the baseline safe RL algorithm in every method we consider in our experiments. See the Appendix for additional results using a Lagrangian-based safe RL update method \citep{tessler_2019_constrained,ray_2019}. We train a multivariate Gaussian policy, where the mean and diagonal covariance at a given state are parameterized by a neural network. We also consider separate neural network parameterizations for the reward and cost critics. See the Appendix for additional implementation details, including network architectures and values of all hyperparameters.\footnote{Code is publicly available at \url{https://github.com/jqueeney/robust-safe-rl}.}

We incorporate robustness into the baseline safe RL algorithm in four ways: (i)~Optimal Transport Perturbations, (ii)~adversarial RL using the action-robust PR-MDP framework from \citet{tessler_2019} applied to the safety constraint, (iii)~the soft-robust approach of domain randomization \citep{peng_2018}, and (iv)~the distributionally robust approach of Risk-Averse Model Uncertainty (RAMU) from \citet{queeney_2023}. For our Optimal Transport Perturbations, we consider $\epsilon_{\delta} = 0.02$ (i.e., 2\% perturbations on average). \figref{fig:all_tasks} and \tabref{tab:experiments} summarize the performance of all algorithms at deployment time, where performance metrics are aggregated across the range of perturbed test environments. See the Appendix for detailed results across all test environments. Note that the training process of each algorithm only considers safety for a specific set of environments, so we do not expect the resulting policies to remain safe across all possible test cases.


\begin{figure}[t]
\begin{center}
\includegraphics[width=1.0\textwidth]{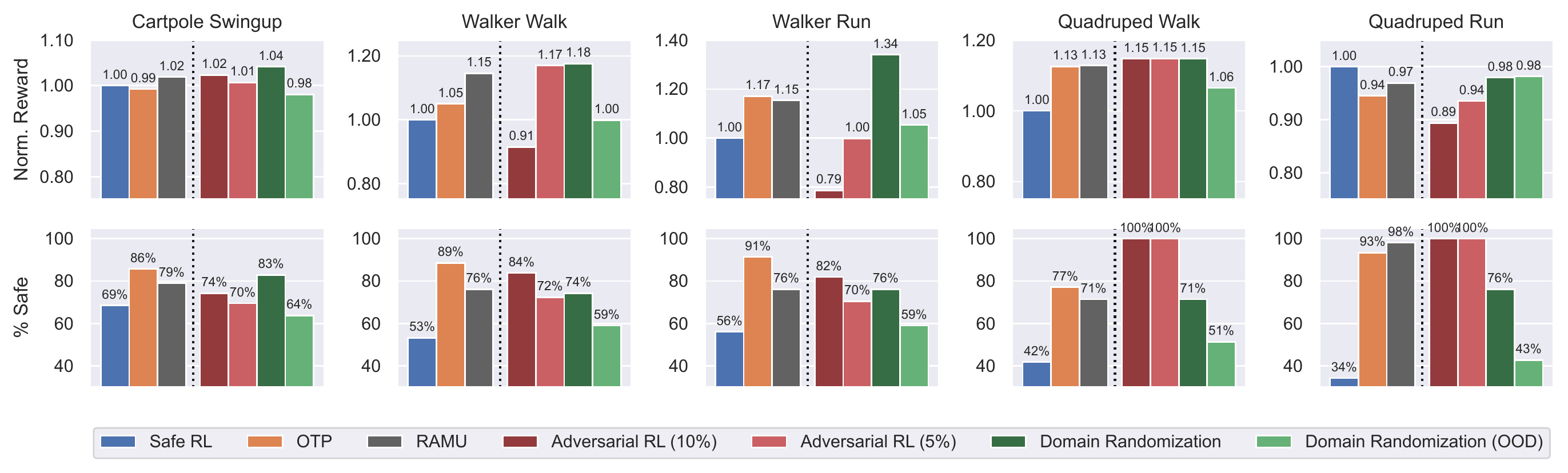}
\caption{Performance summary by task, aggregated across test environments. Performance of adversarial RL is evaluated without adversarial interventions. Algorithms to the right of the dotted lines require additional assumptions during data collection compared to standard safe RL (see \tabref{tab:experiments} for details). Top: Average total reward, normalized relative to the average performance of standard safe RL for each test environment. Bottom: Percentage of policies that satisfy the safety constraint across all test environments.}
\label{fig:all_tasks}
\end{center}
\end{figure}



\subsection{Comparison to safe reinforcement learning}


\begin{figure}[t]
\begin{center}
\includegraphics[width=1.0\textwidth]{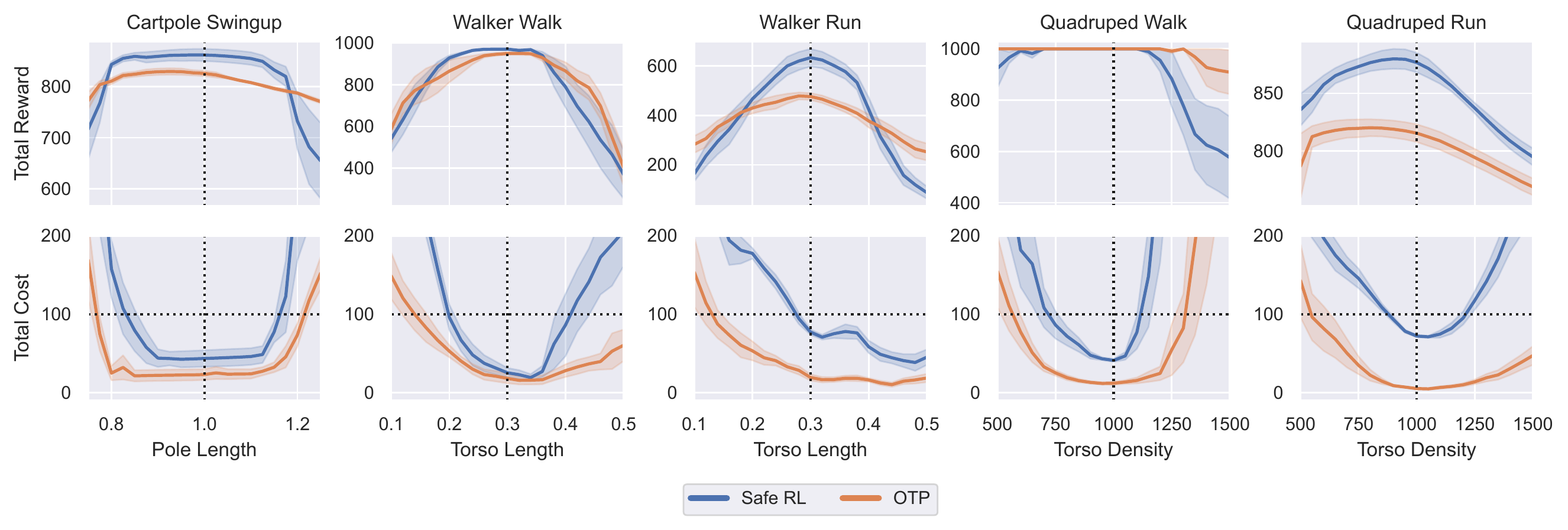}
\caption{Comparison of OTP with standard safe RL across tasks and test environments. Shading denotes one standard error across policies. Vertical dotted lines represent nominal training environment. Top: Total reward. Bottom: Total cost, where horizontal dotted lines represent the safety budget and values below these lines represent safety constraint satisfaction.}
\label{fig:base_vs_otp}
\end{center}
\end{figure}


\figref{fig:base_vs_otp} compares our OTP framework to standard safe RL across all tasks and perturbed test environments. By applying Optimal Transport Perturbations to the objective and constraint in safe RL, we achieve meaningful test-time improvements compared to the standard non-robust version of safe RL. While in most cases we observe a decrease in total rewards in the nominal environment in order to achieve robustness, as expected, on average our framework leads to an increase in total rewards of 1.06x relative to safe RL across the range of test environments. Most importantly, we see a significant improvement in safety, with our algorithm satisfying constraints in 87\% of test cases (compared to 51\% for safe RL) and incurring 0.34x the costs of safe RL, on average. Note that we achieve this robustness while collecting data from the same training environment considered in standard safe RL, without requiring adversarial interventions in the environment or domain knowledge on the structure of the perturbed test environments.


\subsection{Comparison to adversarial reinforcement learning}

Next, we compare our approach to the PR-MDP framework \citep{tessler_2019}, an adversarial RL method that randomly applies adversarial actions a percentage of the time during training. In order to apply this method to the safe RL setting, we train the adversary to maximize costs. We apply the default probability of intervention of 10\% considered in \citet{tessler_2019}. As shown in \figref{fig:all_tasks}, this adversarial approach leads to robust constraint satisfaction at test time in the Quadruped tasks. Our OTP framework, on the other hand, leads to improved constraint satisfaction in the remaining 3 out of 5 tasks. However, the robust safety demonstrated by adversarial RL also results in lower total rewards on average, and is the only robust approach in \tabref{tab:experiments} that underperforms standard safe RL in terms of reward. In order to improve performance with respect to reward, we also considered adversarial RL with a lower probability of intervention of 5\%. While this less adversarial implementation is comparable to our OTP framework in terms of total rewards, it leads to a decrease in safety constraint satisfaction to 82\%. Therefore, our OTP formulation demonstrates the safety benefits of the more adversarial setting and the reward benefits of the less adversarial setting.

In addition, an important drawback of adversarial RL is that it requires the intervention of an adversary in the training environment. Therefore, in order to achieve robust safety at deployment time, adversarial RL incurs additional cost during training due to the presence of an adversary. Even in the Quadruped tasks where adversarial RL results in near-zero cost at deployment time, \figref{fig:train_finalcost} shows that adversarial RL leads to the highest total cost during training due to adversarial interventions. In many real-world situations, this additional cost during training is undesirable. Our OTP framework, on the other hand, achieves the lowest total cost during training, while also resulting in robust safety when deployed in perturbed environments. This is due to the fact that our Optimal Transport Perturbations do not impact the data collection process during training.


\subsection{Comparison to domain randomization}


\begin{figure}
\begin{center}
\includegraphics[width=1.0\textwidth]{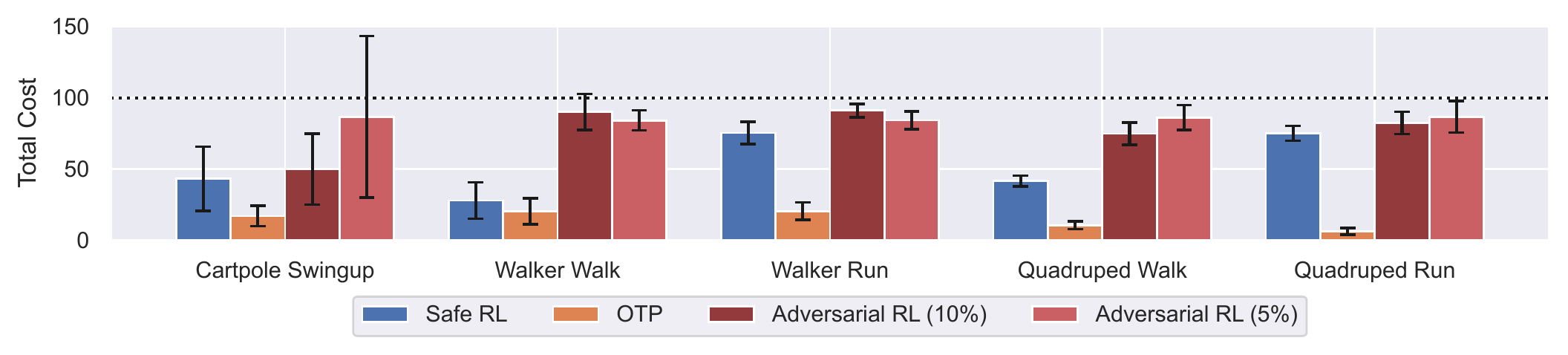}
\caption{Average final training cost in the nominal training environment. Training cost of adversarial RL includes impact of adversarial interventions. Black bars denote one standard deviation across policies. Horizontal dotted line represents safety budget.}
\label{fig:train_finalcost}
\end{center}
\end{figure}


We now compare our OTP framework to the soft-robust approach of domain randomization \citep{peng_2018}, which assumes access to a range of environments during training through the use of a simulator. By training across half of the test environments, domain randomization achieves strong performance across test cases in terms of reward (1.14x compared to safe RL, on average), which was the motivation for its development in the setting of sim-to-real transfer. However, domain randomization only satisfies safety constraints in 76\% of test cases, which is lower than both OTP and adversarial RL which explicitly consider robust formulations. This is likely due to its soft-robust approach that focuses on average performance across the training distribution.

It is important to note that domain randomization not only requires access to a range of training environments, it also requires prior knowledge on the structure of potential disturbances to define its training distribution. In order to evaluate the case where we lack domain knowledge, we consider an out-of-distribution (OOD) version of domain randomization that is trained on a distribution over a different parameter than the one varied at test time. When the training distribution is not appropriately selected, we see that domain randomization provides little benefit compared to standard safe RL. Our OTP framework, on the other hand, guarantees robust and safe performance under general forms of environment uncertainty while only collecting data from a single training environment.


\subsection{Comparison to distributionally robust safe reinforcement learning}

Finally, we compare our robust and safe RL method to a distributionally robust safe RL algorithm from \citet{queeney_2023} called Risk-Averse Model Uncertainty (RAMU).  Distributionally robust RL \citep{xu_2010, yu_2016} considers a distribution over transition models $p \sim \mu$, and provides robustness to worst-case \emph{distributions over transition models} in an ambiguity set centered around $\mu$. RAMU applies a coherent distortion risk measure over $\mu$ to achieve distributionally robust guarantees in a safe RL setting. For specific choices of $\mu$, this approach can be implemented without requiring detailed simulator access. Because distributionally robust RL provides robustness to worst-case \emph{distributions over transition models} instead of worst-case transition models, RAMU results in a softer notion of robustness compared to the robust and safe RL problem in \eqref{eq:robust_safe_rl}.

We see in \figref{fig:all_tasks} and \tabref{tab:experiments} that RAMU slightly improves performance in terms of total rewards compared to our OTP framework due to its less conservative notion of robustness, but this comes at the expense of safety constraint satisfaction. The stronger robustness guarantees of our problem formulation translate to a statistically significant improvement in safety at deployment time compared to distributionally robust safe RL. Our OTP framework remains safe in 87\% of test environments, compared to 80\% for RAMU. Overall, these experimental results are in line with the different robustness guarantees associated with these frameworks.


\section{Conclusion and broader impacts}

In this work, we have developed a general, efficient framework for robust and safe RL. Through the use of optimal transport theory, we demonstrated that we can guarantee robustness to general forms of environment disturbances by applying adversarial perturbations to observed state transitions. These Optimal Transport Perturbations can be efficiently implemented by constructing virtual transitions without impacting data collection during training, and can be easily combined with existing techniques for safe RL to provide protection against unknown disturbances. Because our framework makes limited assumptions on the data collection process during training and does not require directly modifying the environment, it should be compatible with many real-world decision making applications. As a result, we hope that our work represents a promising step towards trustworthy deep RL algorithms that can be reliably deployed to improve real-world decision making. 

Despite this progress, there are limitations to our approach that should be considered prior to potential deployment. Our use of the RC-MDP framework guarantees robust performance and safety for all environments in the uncertainty set $\mathcal{P}$, which requires specifying the transport cost function $d_{s,a}$ and radius $\epsilon_{s,a}$ to achieve the desired robustness without being overly conservative. In addition, we consider the C-MDP definition of safety given by a constraint on expected total costs. In order to develop robust control methods for safety-critical scenarios that instead require avoiding sets of unsafe states, the combination of our OTP framework with control-theoretic approaches to safety represents an interesting avenue for future work. Finally, our experimental analysis focused on a set of 5 simulated continuous control tasks with deterministic dynamics from the Real-World RL Suite. Given the flexibility of our framework, we believe there are exciting opportunities to apply our methodology across a variety of additional tasks and problem settings in future work, including complex robotics applications where accurate simulators are not available and real-world data collection is required for training.


\subsubsection*{Acknowledgments}
This research was partially supported by the NSF under grants CCF-2200052, CNS-1645681, CNS-2149511, DMS-1664644, ECCS-1931600, and IIS-1914792, by the ONR under grants N00014-19-1-2571 and N00014-21-1-2844, by the NIH under grants R01 GM135930 and UL54 TR004130, by AFOSR under grant FA9550-19-1-0158, by ARPA-E under grant DE-AR0001282, by the MathWorks, and by the Boston University Kilachand Fund for Integrated Life Science and Engineering. The majority of this work was done while James Queeney was with the Division of Systems Engineering, Boston University. He is currently with and exclusively supported by Mitsubishi Electric Research Laboratories.


\bibliographystyle{tmlr}
\bibliography{Queeney_TMLR_2024_03}


\appendix


\section{Proofs}

In order to prove the tractable reformulation in \thmref{thm:adversarial}, we will make use of the following result.

\begin{lemma}\label{lem:dual}
Let \assref{ass:obj_cont} hold. Then, we have
\begin{align}
&\mathcal{T}^{\pi}_{\mathcal{P},r} Q_r(s,a) = r(s,a) + \gamma \,\, \sup_{\lambda \geq 0} \,\, \E_{\hat{s}' \sim \hat{p}_{s,a}} \left[ \,\, \inf_{s' \in \mathcal{S}} \,\, V_r^{\pi}(s') + \lambda \left(  d_{s,a}(\hat{s}',s') - \epsilon_{s,a} \right) \right], \label{eq:dual_r} \\
&\mathcal{T}^{\pi}_{\mathcal{P},c} Q_c(s,a) = c(s,a) + \gamma \,\, \inf_{\lambda \geq 0} \,\, \E_{\hat{s}' \sim \hat{p}_{s,a}} \left[ \,\, \sup_{s' \in \mathcal{S}} \,\, V_c^{\pi}(s') - \lambda \left(  d_{s,a}(\hat{s}',s') - \epsilon_{s,a} \right) \right]. \label{eq:dual_c}
\end{align}
\end{lemma}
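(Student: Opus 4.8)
The plan is to recognize each inner optimization in \eqref{eq:bellman_r} and \eqref{eq:bellman_c} as a distributionally robust optimization problem over a ball in optimal transport cost, and then invoke the strong duality theorem of \citet{blanchet_2019}. First I would fix the state-action pair $(s,a)$ and rewrite the constraint set $\mathcal{P}_{s,a}$ explicitly in terms of couplings: by \defref{def:otc}, a transition model $p_{s,a}$ satisfies $\textnormal{OTC}_{d_{s,a}}(\hat{p}_{s,a},p_{s,a}) \le \epsilon_{s,a}$ if and only if there exists a coupling $\nu$ of $\hat{p}_{s,a}$ and $p_{s,a}$ with $\int d_{s,a}\,\mathrm{d}\nu \le \epsilon_{s,a}$. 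Consequently the worst-case expectation $\sup_{p_{s,a} \in \mathcal{P}_{s,a}} \E_{s' \sim p_{s,a}}[V_c^{\pi}(s')]$ equals the supremum, over couplings $\nu$ with first marginal $\hat{p}_{s,a}$ and transport budget $\int d_{s,a}\,\mathrm{d}\nu \le \epsilon_{s,a}$, of $\int V_c^{\pi}(s')\,\mathrm{d}\nu$ (the forward inclusion uses an optimal coupling attaining the OTC; the reverse uses that the second marginal $p$ of any such $\nu$ satisfies $\textnormal{OTC} \le \int d_{s,a}\,\mathrm{d}\nu \le \epsilon_{s,a}$), and symmetrically for the reward infimum.

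This is precisely the form to which the duality result of \citet{blanchet_2019} applies. The cost function $d_{s,a}$ is nonnegative and lower semicontinuous with $d_{s,a}(s',s')=0$ by \defref{def:otc}, and \assref{ass:obj_cont} supplies the semicontinuity and integrability of $V_c^{\pi}$ required by their hypotheses. I would therefore apply their theorem directly to the supremum problem to obtain $\sup_{p_{s,a} \in \mathcal{P}_{s,a}} \E_{s' \sim p_{s,a}}[V_c^{\pi}(s')] = \inf_{\lambda \ge 0} \E_{\hat{s}' \sim \hat{p}_{s,a}}[\sup_{s' \in \mathcal{S}} V_c^{\pi}(s') - \lambda(d_{s,a}(\hat{s}',s') - \epsilon_{s,a})]$; multiplying by $\gamma$ and adding $c(s,a)$ yields \eqref{eq:dual_c}. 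For the reward operator, I would apply the same theorem to $-V_r^{\pi}$, which is upper semicontinuous precisely because $V_r^{\pi}$ is lower semicontinuous under \assref{ass:obj_cont}, and negate the resulting identity; negation converts the outer infimum over $\lambda$ into a supremum and the inner supremum over $s'$ into an infimum, delivering \eqref{eq:dual_r}.

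Conceptually, the content of the cited theorem is a two-step argument I would otherwise have to reproduce. The first step is Lagrangian duality for the transport-budget constraint $\int d_{s,a}\,\mathrm{d}\nu \le \epsilon_{s,a}$, relaxing it with a multiplier $\lambda \ge 0$. The second step is the interchange of the inner coupling optimization with integration against $\hat{p}_{s,a}$: because the first marginal of $\nu$ is pinned to $\hat{p}_{s,a}$, the coupling disintegrates into conditional kernels and the optimization reduces to the pointwise inner problem $\sup_{s' \in \mathcal{S}} V_c^{\pi}(s') - \lambda d_{s,a}(\hat{s}',s')$ taken inside the expectation.

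The main obstacle is establishing the absence of a duality gap in this infinite-dimensional problem and justifying the interchange of optimization and integration via measurable selection. Both are delicate because $\mathcal{S}$ is only assumed Polish and $V_r^{\pi}, V_c^{\pi}$ are merely semicontinuous rather than continuous. This is exactly where \assref{ass:obj_cont}---the semicontinuity and $\hat{p}_{s,a}$-integrability conditions---does its work, matching the standing assumptions of \citet{blanchet_2019}. Thus the crux of the argument is verifying that our setting satisfies those assumptions and reducing to their theorem, rather than reproving the duality from scratch.
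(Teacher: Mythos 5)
Your proposal is correct and follows essentially the same route as the paper: verify that the transport cost and the semicontinuity/integrability conditions of \assref{ass:obj_cont} match the standing assumptions of \citet{blanchet_2019}, invoke their strong duality theorem for the cost-side supremum, and obtain the reward-side identity by negating $V_r^{\pi}$ and flipping the inf/sup. The additional detail you give on the coupling reformulation and the two-step content of the cited theorem is a faithful unpacking of what the reference supplies, not a departure from the paper's argument.
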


Note that
\begin{equation*}
\inf_{p_{s,a} \in \mathcal{P}_{s,a}} \E_{s' \sim p_{s,a}} \left[ V_r^{\pi}(s') \right] = - \sup_{p_{s,a} \in \mathcal{P}_{s,a}} \E_{s' \sim p_{s,a}} \left[ -V_r^{\pi}(s') \right].
\end{equation*}
Therefore, in this section we only prove results for the robust cost Bellman operator $\mathcal{T}^{\pi}_{\mathcal{P},c}$. Results related to the robust reward Bellman operator $\mathcal{T}^{\pi}_{\mathcal{P},r}$ follow immediately by applying the same proofs after an appropriate change in signs.


\subsection{Proof of \lemref{lem:dual}}

\begin{proof}
Under \assref{ass:obj_cont}, note that Assumption~(A1) and Assumption~(A2) of \citet{blanchet_2019} are satisfied for the worst-case optimization problem over transition models in \eqref{eq:bellman_c}. Assumption~(A1) is satisfied by our definition of optimal transport cost, and Assumption~(A2) is satisfied by our \assref{ass:obj_cont}. Then, according to Theorem~1 in \citet{blanchet_2019}, optimal transport strong duality holds for the optimization problem in \eqref{eq:bellman_c}. Therefore, we have that
\begin{equation*}
\sup_{p_{s,a} \in \mathcal{P}_{s,a}} \E_{s' \sim p_{s,a}} \left[ V_c^{\pi}(s') \right] = \inf_{\lambda \geq 0} \,\,  \E_{\hat{s}' \sim \hat{p}_{s,a}} \left[ \,\, \sup_{s' \in \mathcal{S}} \,\, V_c^{\pi}(s') - \lambda \left(  d_{s,a}(\hat{s}',s') - \epsilon_{s,a} \right) \right].
\end{equation*}
By substituting this result into \eqref{eq:bellman_c}, we arrive at the result in \eqref{eq:dual_c}.
\end{proof}


\subsection{Proof of \thmref{thm:adversarial}}

\begin{proof}
First, we show that \eqref{eq:adversarial_c} and the worst-case optimization problem over transition models in \eqref{eq:bellman_c} share the same dual problem. We write the dual problem to \eqref{eq:adversarial_c} as
\begin{multline*}
\inf_{\lambda \geq 0} \,\, \sup_{g \in \mathcal{G}} \,\, \E_{\hat{s}' \sim \hat{p}_{s,a}} \left[ V_c^{\pi}(g(\hat{s}')) \right] - \lambda \left( \E_{\hat{s}' \sim \hat{p}_{s,a}} \left[ d_{s,a}(\hat{s}',g(\hat{s}')) \right] - \epsilon_{s,a}  \right) \\
= \inf_{\lambda \geq 0} \,\, \sup_{g \in \mathcal{G}} \,\, \E_{\hat{s}' \sim \hat{p}_{s,a}} \left[ V_c^{\pi}(g(\hat{s}')) - \lambda \left(  d_{s,a}(\hat{s}',g(\hat{s}')) - \epsilon_{s,a}  \right) \right].
\end{multline*}
Using the definition of $\mathcal{G}$, we can rewrite this as
\begin{equation}\label{eq:adversarial_c_dual}
\inf_{\lambda \geq 0} \,\,  \E_{\hat{s}' \sim \hat{p}_{s,a}} \left[ \,\, \sup_{s' \in \mathcal{S}} \,\, V_c^{\pi}(s') - \lambda \left(  d_{s,a}(\hat{s}',s') - \epsilon_{s,a} \right) \right],
\end{equation}
which appears in the right-hand side of \eqref{eq:dual_c} from \lemref{lem:dual}. As shown in \lemref{lem:dual}, \eqref{eq:adversarial_c_dual} is also the dual to the optimization problem in \eqref{eq:bellman_c}, and optimal transport strong duality holds. 

Next, we show that strong duality holds between \eqref{eq:adversarial_c} and \eqref{eq:adversarial_c_dual}. Let $\lambda^*$ be the optimal dual variable in \eqref{eq:adversarial_c_dual}, and let
\begin{equation*}
g^*_{s,a}(\hat{s}') \in \arg \max_{s' \in \mathcal{S}} \,\, V_c^{\pi}(s') - \lambda^* \left(  d_{s,a}(\hat{s}',s') - \epsilon_{s,a} \right).
\end{equation*}
We have that $\lambda^*$ and $g^*_{s,a}(\hat{s}')$ exist according to Theorem~1(b) in \citet{blanchet_2019} along with \assref{ass:transport}, and $g^*_{s,a}$ characterizes the optimal transport plan $\nu^*$ that moves the probability of $\hat{s}'$ under $\hat{p}_{s,a}$ to $g^*_{s,a}(\hat{s}')$. By the complementary slackness results of Theorem~1(b) in \citet{blanchet_2019}, we also have that
\begin{equation*}
\lambda^* \left( \E_{\hat{s}' \sim \hat{p}_{s,a}} \left[ d_{s,a}(\hat{s}',g^*_{s,a}(\hat{s}')) \right] -\epsilon_{s,a} \right) = 0.
\end{equation*}
Therefore,
\begin{align*}
\inf_{\lambda \geq 0} \,\,  \E_{\hat{s}' \sim \hat{p}_{s,a}} \left[ \,\, \sup_{s' \in \mathcal{S}} \,\, V_c^{\pi}(s') - \lambda \left(  d_{s,a}(\hat{s}',s') - \epsilon_{s,a} \right) \right] &= \E_{\hat{s}' \sim \hat{p}_{s,a}} \left[ V_c^{\pi}(g^*_{s,a}(\hat{s}')) - \lambda^* \left(  d_{s,a}(\hat{s}',g^*_{s,a}(\hat{s}')) - \epsilon_{s,a} \right) \right] \\
&= \E_{\hat{s}' \sim \hat{p}_{s,a}} \left[ V_c^{\pi}(g^*_{s,a}(\hat{s}')) \right].
\end{align*}
Moreover, by the primal feasibility of the optimal transport plan $\nu^*$, we have that
\begin{equation*}
\E_{\hat{s}' \sim \hat{p}_{s,a}} \left[ d_{s,a}(\hat{s}',g^*_{s,a}(\hat{s}')) \right] \leq \epsilon_{s,a},
\end{equation*}
so $g^*_{s,a}$ is a feasible solution to \eqref{eq:adversarial_c} with the same objective value as the value of \eqref{eq:adversarial_c_dual}. Therefore, strong duality holds between \eqref{eq:adversarial_c} and \eqref{eq:adversarial_c_dual}, and $g^*_{s,a}$ is an optimal solution to \eqref{eq:adversarial_c} (i.e., an optimal solution to \eqref{eq:adversarial_c} exists). Then, for any optimal solution $g^{c}_{s,a}$ to \eqref{eq:adversarial_c}, we have that  
\begin{equation*}
\E_{\hat{s}' \sim \hat{p}_{s,a}} \left[ V_c^{\pi}(g^{c}_{s,a}(\hat{s}')) \right] = \E_{\hat{s}' \sim \hat{p}_{s,a}} \left[ V_c^{\pi}(g^*_{s,a}(\hat{s}')) \right],
\end{equation*}
and the right-hand side of \eqref{eq:dual_c} is equivalent to the right-hand side of \eqref{eq:closed_c}.
\end{proof}


\section{Implementation details}


\subsection{Safety constraints and environment perturbations}

We consider the same experimental design used in \citet{queeney_2023} to define our training and test environments. For each task, we consider a single safety constraint defined in the Real-World RL Suite, which we summarize in \tabref{tab:constraints}. A policy incurs cost in the Cartpole domain when the slider moves outside of a specified range, in the Walker domain when the velocity of any joint exceeds a threshold, and in the Quadruped domain when joint angles are outside of an acceptable range. The specific ranges that result in cost violations are determined by the safety coefficients in \tabref{tab:constraints}, which can take values in $\left[0,1\right]$ where lower values make cost violations more likely. See \citet{dulacarnold_2021} for detailed definitions of the safety constraints we consider.


\begin{table}[t]
\caption{Safety constraints for all tasks}
\label{tab:constraints}
\begin{center}
\begin{tabular}{llc}
\toprule
 &  & Safety \\
Task & Safety Constraint & Coefficient \\
\midrule
Cartpole Swingup  & Slider Position & 0.30 \\
Walker Walk       & Joint Velocity  & 0.25 \\
Walker Run        & Joint Velocity  & 0.30 \\
Quadruped Walk    & Joint Angle     & 0.15 \\
Quadruped Run     & Joint Angle     & 0.30 \\
\bottomrule
\end{tabular}
\end{center}
\end{table}


We evaluate the performance of learned policies across a range of test environments different from the training environment. We define these test environments by varying a simulator parameter in each domain across a range of values, which are listed in \tabref{tab:perturbations}. We vary the length of the pole in the Cartpole domain, the length of the torso in the Walker domain, and the density of the torso in the Quadruped domain. Note that the parameter value associated with the nominal training environment is in the center of the range of parameter values considered at test time. For each test environment, we evaluate performance of the learned policies via 10 trajectory rollouts.

Finally, note that the domain randomization baselines consider a range of environments during training. We consider the same training distributions for domain randomization as in \citet{queeney_2023}. As summarized in \tabref{tab:dr_range}, in-distribution domain randomization applies a uniform distribution over the middle 50\% of the parameter values considered at test time. For the out-of-distribution variant of domain randomization, we instead consider a uniform distribution over a range of values for a different simulator parameter than the one varied at test time. See \tabref{tab:dr_range} for details.


\begin{table}[t]
\caption{Perturbation ranges for test environments across domains}
\label{tab:perturbations}
\begin{center}
\begin{tabular}{llcc}
\toprule
 & Perturbation & Nominal \\
Domain & Parameter & Value & Test Range \\
\midrule
Cartpole  & Pole Length      & 1.00      & $\left[ 0.75, 1.25 \right]$ \\
Walker       & Torso Length  & 0.30      & $\left[ 0.10, 0.50 \right]$ \\
Quadruped    & Torso Density & $1{,}000$ & $\left[ 500, 1{,}500 \right]$ \\
\bottomrule
\end{tabular}
\end{center}
\end{table}



\begin{table}[t]
\caption{Perturbation parameters and ranges for domain randomization across domains}
\label{tab:dr_range}
\begin{center}
\begin{tabular}{llcc}
\toprule
       & Perturbation & Nominal & Training \\
Domain & Parameter    & Value   & Range    \\
\midrule
\addlinespace
In-Distribution \\
\cmidrule(l){1-1}
Cartpole     & Pole Length   & 1.00      & $\left[ 0.875, 1.125 \right]$    \\
Walker       & Torso Length  & 0.30      & $\left[ 0.20, 0.40 \right]$     \\
Quadruped    & Torso Density & $1{,}000$ & $\left[ 750, 1{,}250 \right]$  \\
\addlinespace
Out-of-Distribution \\
\cmidrule(l){1-1}
Cartpole     & Pole Mass   & 0.10      & $\left[ 0.05, 0.15 \right]$  \\
Walker       & Contact Friction  & 0.70      & $\left[ 0.40, 1.00 \right]$ \\
Quadruped    & Contact Friction & 1.50 & $\left[ 1.00, 2.00 \right]$ \\
\addlinespace
\bottomrule
\end{tabular}
\end{center}
\end{table}



\subsection{Network architectures}

In our experiments, we consider neural network representations of the policy and critics. We consider networks with 3 hidden layers of 256 units and ELU activations, and we apply layer normalization followed by a tanh activation after the first layer as in \citet{abdolmaleki_2020}. We represent the policy as a multivariate Gaussian distribution with diagonal covariance, where at a given state the policy network outputs the mean $\mu(s)$ and diagonal covariance $\Sigma(s)$ of the action distribution. The diagonal of $\Sigma(s)$ is calculated by applying the softplus operator to the outputs of the neural network corresponding to the covariance. In addition to the policy network, we consider separate networks for the reward and cost critics. We maintain target versions of the policy and critic networks using an exponential moving average of the weights with $\tau = 5\textnormal{e}{-3}$. Finally, we also consider neural networks for our perturbation networks $\delta_r$ and $\delta_c$. In this work, we consider small networks with 2 hidden layers of 64 units and ELU activations. We clip the outputs in the range $\left[-2\epsilon_{\delta},2\epsilon_{\delta}\right]$ for additional stability.



\begin{table}[t]
\caption{Network architectures and algorithm hyperparameters used in experiments}
\label{tab:hyper}
\begin{center}
\begin{tabular}{lcc}
\toprule
\\
General &     \\
\cmidrule(lr){1-1}
Batch size per update          && 256 \\
Updates per environment step   && 1 \\
Discount rate ($\gamma$)        && 0.99 \\
Target network exponential moving average ($\tau$)        && $5\textnormal{e}{-3}$ \\[1.0em]

Policy &     \\
\cmidrule(lr){1-1}
Layer sizes       && 256, 256, 256 \\
Layer activations && ELU \\
Layer norm + tanh on first layer && Yes \\
Initial standard deviation && 0.3 \\
Learning rate && $1\textnormal{e}{-4}$ \\
Non-parametric KL ($\epsilon_{\textnormal{KL}}$) && 0.10 \\
Action penalty KL                                && $1\textnormal{e}{-3}$ \\
Action samples per update && 20 \\
Parametric mean KL ($\beta_{\mu}$) && 0.01 \\
Parametric covariance KL ($\beta_{\Sigma}$) && $1\textnormal{e}{-5}$ \\
Parametric KL dual learning rate && 0.01 \\[1.0em]

Critics &     \\
\cmidrule(lr){1-1}
Layer sizes && 256, 256, 256 \\
Layer activations && ELU \\
Layer norm + tanh on first layer && Yes \\
Learning rate && $1\textnormal{e}{-4}$ \\[1.0em]

Optimal Transport Perturbations &     \\
\cmidrule(lr){1-1}
Layer sizes       && 64, 64 \\
Layer activations && ELU \\
Layer norm + tanh on first layer && No \\
Output clipping       && $\left[-2\epsilon_{\delta},2\epsilon_{\delta}\right]$ \\
Learning rate && $1\textnormal{e}{-4}$ \\
Dual learning rate && 0.01 \\
Per-coordinate perturbation magnitude ($\epsilon_{\delta}$) && 0.02 \\\\

\bottomrule
\end{tabular}
\end{center}
\end{table}


\subsection{Algorithm hyperparameters}

All of the experimental results in \secref{sec:experiments} build upon the baseline safe RL algorithm CRPO \citep{xu_2021}. At every update, CRPO calculates the current value of the safety constraint based on a batch of sampled data. If the safety constraint is satisfied for the current batch, it applies a policy update to maximize rewards. Otherwise, it applies a policy update to minimize costs. In both cases, we use the unconstrained RL algorithm MPO \citep{abdolmaleki_2018} to calculate policy updates. MPO calculates a non-parametric target policy with KL divergence $\epsilon_{\textnormal{KL}}$ from the current policy, and updates the current policy towards this target while constraining separate KL divergence contributions from the mean and covariance by $\beta_{\mu}$ and $\beta_{\Sigma}$, respectively. We apply per-dimension KL divergence constraints and action penalization using the multi-objective MPO framework \citep{abdolmaleki_2020} as in \citet{hoffman_2020}, and we consider closed-form updates of the temperature parameter used in the non-parametric target policy as in \citet{liu_2022} to account for the immediate switching between objectives in CRPO. See \tabref{tab:hyper} for all important hyperparameter values associated with the implementation of policy updates using MPO, and see \citet{abdolmaleki_2018} for additional details.

For our OTP framework, we update the perturbation networks alongside the policy and critics. We combine the perturbation function updates in \eqref{eq:adversarial_r} and \eqref{eq:adversarial_c} across state-action pairs by averaging over samples from the replay buffer, and we apply a radius of $\epsilon_{\delta}^2$. This leads to updates of the form
\begin{align*}
g_{\delta_r} &\in \arg \, \min_{g_{\delta} \in \mathcal{F}_{\delta}} \, \E_{(s,a,\hat{s}') \sim \mathcal{D}} \left[ V_r^{\pi}(g_{\delta}(s,a,\hat{s}')) \right] \quad \textnormal{s.t.} \quad \E_{(s,a,\hat{s}') \sim \mathcal{D}} \left[ d_{s,a}(\hat{s}',g_{\delta}(s,a,\hat{s}')) \right]  \leq \epsilon_{\delta}^2, \\
g_{\delta_c} &\in \arg \, \max_{g_{\delta} \in \mathcal{F}_{\delta}} \,\, \E_{(s,a,\hat{s}') \sim \mathcal{D}} \left[ V_c^{\pi}(g_{\delta}(s,a,\hat{s}')) \right] \quad \textnormal{s.t.} \quad \E_{(s,a,\hat{s}') \sim \mathcal{D}} \left[ d_{s,a}(\hat{s}',g_{\delta}(s,a,\hat{s}')) \right]  \leq \epsilon_{\delta}^2,
\end{align*}
where $\mathcal{F}_{\delta}$ represents the class of perturbation functions with the form in \eqref{eq:otp_implementation}. We consider $d_{s,a}$ given by \eqref{eq:d} to arrive at the perturbation network updates in \eqref{eq:deep_delta_r} and \eqref{eq:deep_delta_c}, where $\epsilon_{\delta}$ determines the average per-coordinate magnitude of the outputs of $\delta_r$ and $\delta_c$. We apply gradient-based updates on the Lagrangian relaxations of \eqref{eq:deep_delta_r} and \eqref{eq:deep_delta_c}, and we also update the corresponding dual variables throughout training. See \tabref{tab:hyper} for hyperparameter values associated with our OTP framework.

Finally, we also implement adversarial RL, domain randomization, and RAMU using CRPO with MPO policy updates. The adversarial policy used in the PR-MDP framework is updated using MPO with the goal of maximizing costs. Using the default settings from \citet{tessler_2019}, we apply one adversary update for every 10 policy updates. Domain randomization considers the same updates as the CRPO baseline, but collects data from the range of training environments summarized in \tabref{tab:dr_range}. RAMU applies the distribution over transition models and risk measure used in \citet{queeney_2023}.


\subsection{Computational resources}

All experiments were run on a Linux cluster with 2.9 GHz Intel Gold processors and NVIDIA A40 and A100 GPUs. The Real-World RL Suite \citep{dulacarnold_2020,dulacarnold_2021} is available under the Apache License 2.0. Using code that has not been optimized for execution speed, each combination of algorithm and task required approximately one day of wall-clock time on a single GPU to train policies for 1 million steps across 5 random seeds.


\section{Additional experimental results}


\subsection{Detailed experimental results across test environments}


\begin{figure}[t]
\begin{center}
\includegraphics[width=1.0\textwidth]{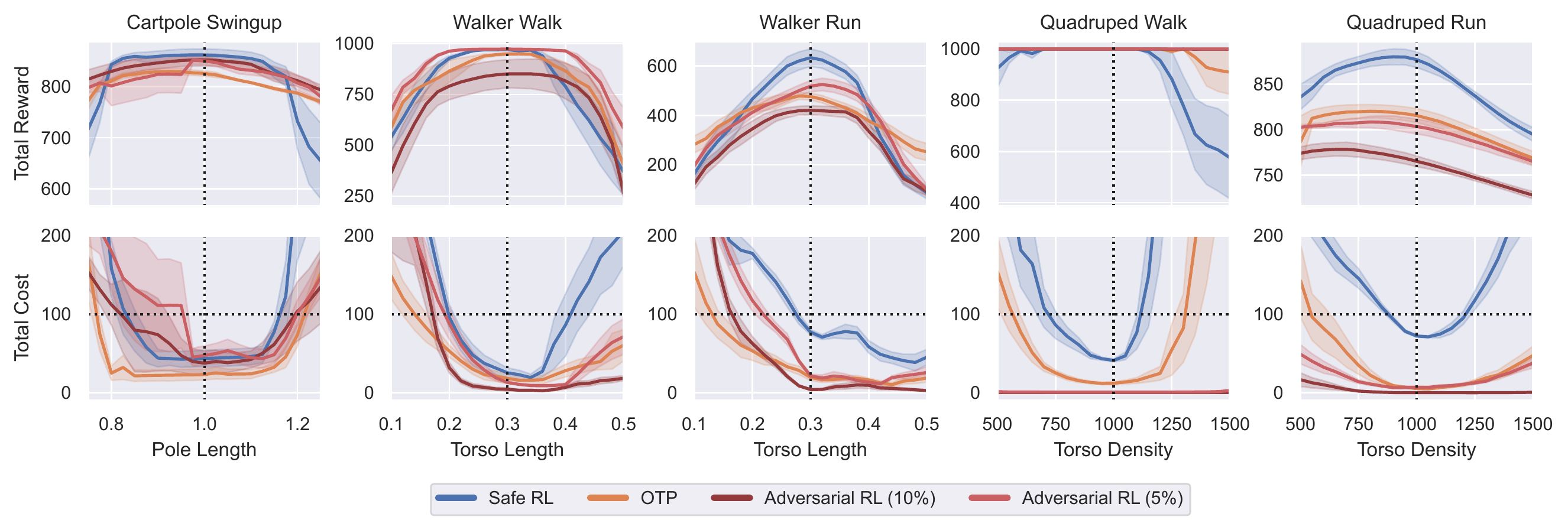}
\caption{Comparison with adversarial RL. Performance of adversarial RL is evaluated without adversarial interventions. Shading denotes one standard error across policies. Vertical dotted lines represent nominal training environment. Top: Total reward. Bottom: Total cost, where horizontal dotted lines represent the safety budget and values below these lines represent safety constraint satisfaction.}
\label{fig:base_vs_otp_vs_adversarial}
\end{center}
\end{figure}


In this section, we include detailed results across tasks and test environments for all algorithms listed in \tabref{tab:experiments}. \figref{fig:base_vs_otp_vs_adversarial} compares safe RL and OTP to both variations of adversarial RL using different probabilities of adversary intervention. In general, adversarial RL with 5\% intervention probability achieves similar total reward to OTP across all tasks, while the more adversarial 10\% implementation leads to overly conservative performance. Both versions of adversarial RL generate strong, safe performance in the Quadruped tasks, while our OTP framework results in more consistent constraint satisfaction in the other 3 out of 5 tasks. Importantly, our OTP framework accomplishes this without requiring adversarial interventions during data collection.

\figref{fig:base_vs_otp_vs_DR} shows the performance of domain randomization across tasks and environment perturbations. The grey shaded areas represent the ranges of the training distributions for the in-distribution implementation of domain randomization. We see that domain randomization leads to strong, robust performance in terms of rewards across all test cases, as well as improved constraint satisfaction in perturbed environments compared to standard safe RL with a single training environment. However, in tasks such as Walker Run and Quadruped Run, domain randomization does not robustly satisfy safety constraints for test environments that were not seen during training. 
This issue is amplified in the case of out-of-distribution domain randomization, which does not demonstrate consistent robustness benefits compared to standard safe RL. In fact, it even leads to an increase in constraint-violating test cases in Cartpole Swingup compared to safe RL. This demonstrates that training on multiple environments does not necessarily lead to robust performance. Instead, domain knowledge is critical in order for domain randomization to work well in practice.

Finally, \figref{fig:base_vs_otp_vs_ramu} shows the performance of RAMU across tasks and test environments. The distributionally robust approach of RAMU leads to slight improvements in terms of rewards compared to OTP across most tasks and test environments, while our OTP framework delivers more robust safety constraint satisfaction in 4 out of 5 tasks. This highlights the different robustness guarantees associated with these frameworks.


\begin{figure}[t]
\begin{center}
\includegraphics[width=1.0\textwidth]{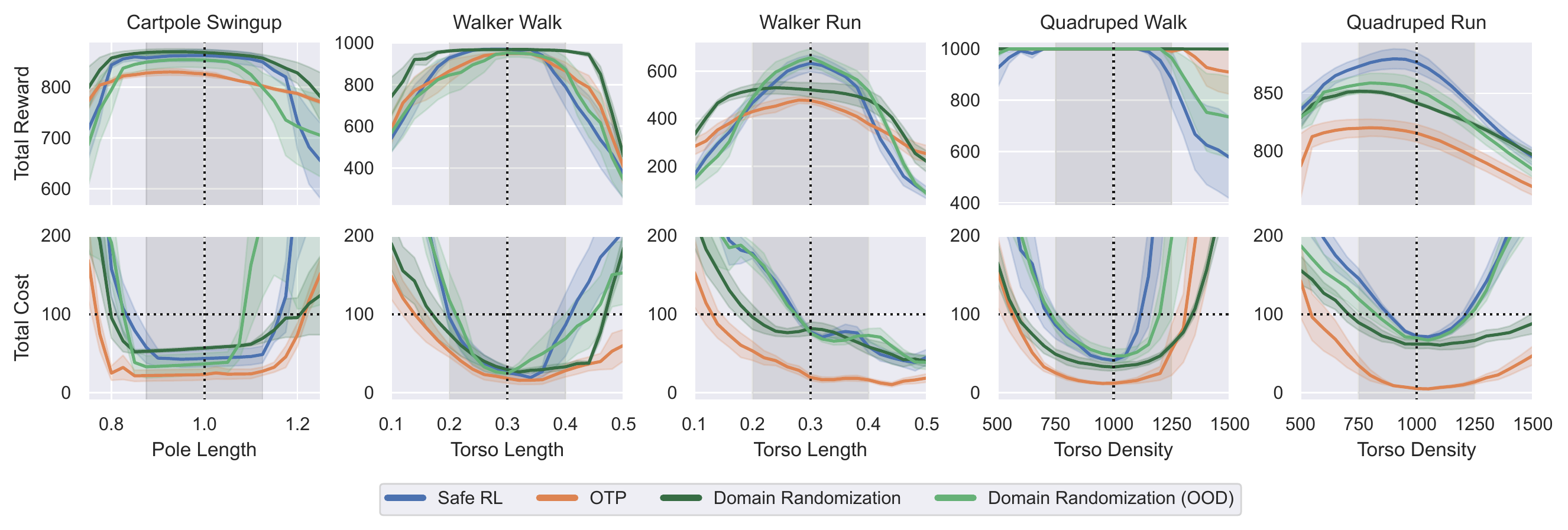}
\caption{Comparison with domain randomization. Shading denotes one standard error across policies. Grey shaded areas represent ranges of training distributions for in-distribution version of domain randomization. Vertical dotted lines represent nominal training environment. Top: Total reward. Bottom: Total cost, where horizontal dotted lines represent the safety budget and values below these lines represent safety constraint satisfaction.}
\label{fig:base_vs_otp_vs_DR}
\end{center}
\end{figure}



\begin{figure}[t]
\begin{center}
\includegraphics[width=1.0\textwidth]{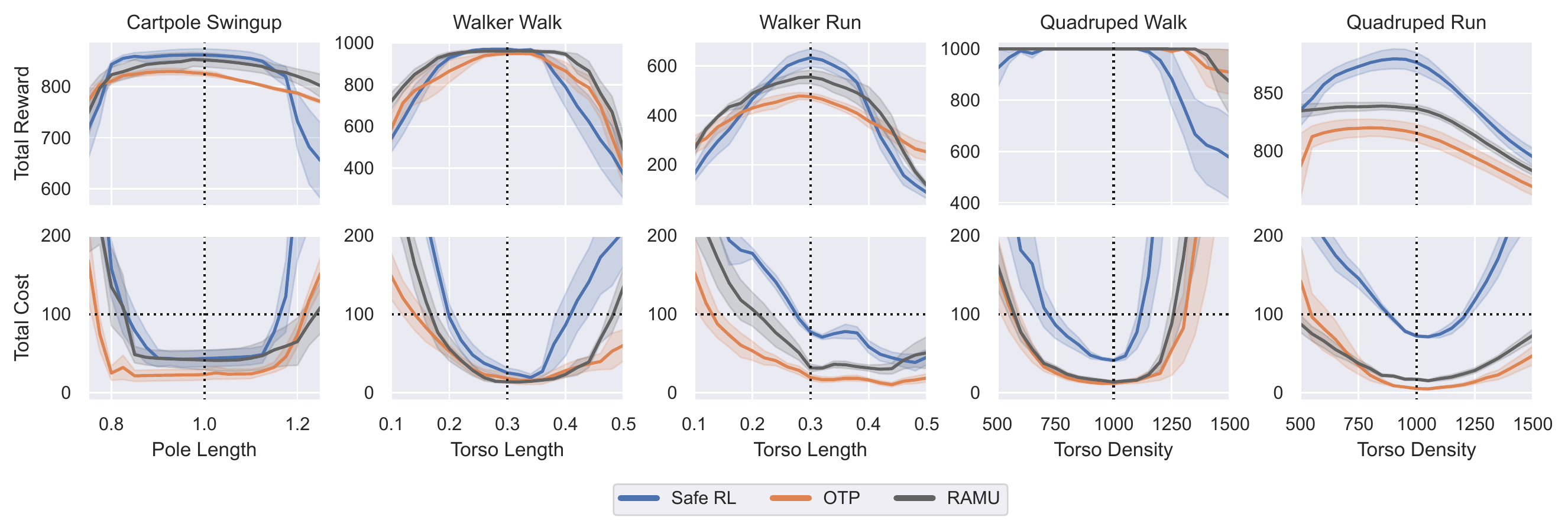}
\caption{Comparison with distributionally robust safe RL. Shading denotes one standard error across policies. Vertical dotted lines represent nominal training environment. Top: Total reward. Bottom: Total cost, where horizontal dotted lines represent the safety budget and values below these lines represent safety constraint satisfaction.}
\label{fig:base_vs_otp_vs_ramu}
\end{center}
\end{figure}



\subsection{Comparison of safe reinforcement learning update methods}


\begin{table}[t]
\caption{Comparison of safe RL update methods}
\label{tab:summary_crpo_vs_lagrange}
\begin{center}
\begin{tabular}{L{0.35} C{0.09} C{0.09} C{0.09} C{0.09} }
\toprule
          & Training & Deploy  &  \multicolumn{2}{M{0.18}{1}}{Normalized Ave.\textsuperscript{$\ddag$}}  \\  \cmidrule(lr){4-5}
Algorithm & \% Safe\textsuperscript{$\S$} & \% Safe\textsuperscript{$\dag$} & Reward   & Cost \\
\midrule
CRPO & \\
\cmidrule(lr){1-1}
Safe RL                       & \sigspace97\%\sig & \sigspace51\%\sig & \sigspace1.00\sig & \sigspace1.00\sig  \\
\textbf{OTP}                  & \sigspace\textbf{98\%}\sigspace & \sigspace\textbf{87\%}\sigspace & \sigspace\textbf{1.06}\sigspace & \sigspace\textbf{0.34}\sigspace  \\[1.0em]
Lagrange & \\
\cmidrule(lr){1-1}
Safe RL                       & \sigspace85\%\sig & \sigspace62\%\sig & \sigspace1.02\sigspace & \sigspace0.83\sig  \\
\textbf{OTP}                  & \sigspace\textbf{91\%}\sigspace & \sigspace\textbf{92\%}\sigspace & \sigspace\textbf{1.05}\sigspace & \sigspace\textbf{0.25}\sigspace  \\
\midrule
\multicolumn{5}{l}{\footnotesize{\textsuperscript{$\S$} Percentage of time throughout training that policies satisfy the safety constraint in the nominal }} \\
\multicolumn{5}{l}{\footnotesize{\hphantom{\textsuperscript{$\S$}} training environment, aggregated across all tasks.}} \\
\multicolumn{5}{l}{\footnotesize{\textsuperscript{$\dag$} Percentage of policies that satisfy the safety constraint across all tasks and test environments.}} \\
\multicolumn{5}{l}{\footnotesize{\textsuperscript{$\ddag$} Normalized relative to the average performance of standard safe RL using CRPO for each task and}} \\
\multicolumn{5}{l}{\footnotesize{\hphantom{\textsuperscript{$\ddag$}} test environment.}} \\
\multicolumn{5}{l}{\footnotesize{\textsuperscript{$\ast$} Statistically significant difference ($p<0.05$) compared to OTP using a paired t-test.}} \\
\bottomrule
\end{tabular}
\end{center}
\end{table}



\begin{figure}[t]
\begin{center}
\includegraphics[width=1.0\textwidth]{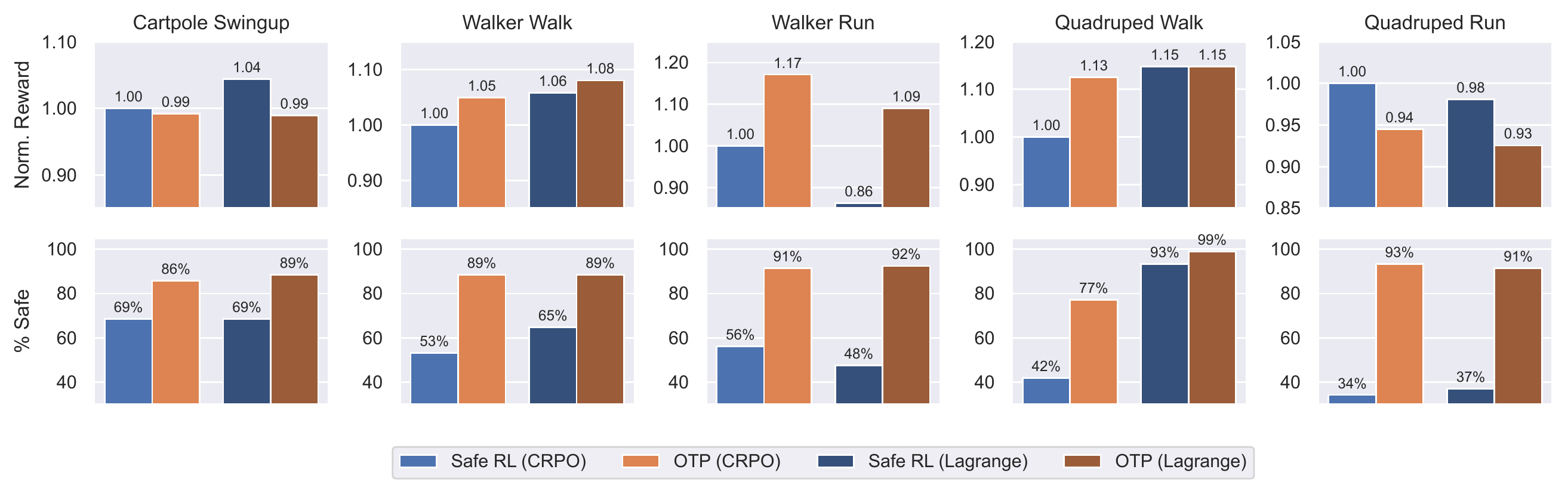}
\caption{Performance summary by task for safe RL and OTP using different safety methods (CRPO and Lagrange), aggregated across test environments. Top: Average total reward, normalized relative to the average performance of standard safe RL using CRPO for each test environment. Bottom: Percentage of policies that satisfy the safety constraint across all test environments.}
\label{fig:base_vs_otp_lagrange_bars}
\end{center}
\end{figure}


Our Optimal Transport Perturbations can be combined with many popular safe RL algorithms that perform policy updates according to \eqref{eq:safe_rl_PO} by replacing standard Q functions with robust Q functions as in \eqref{eq:robust_safe_rl_PO}. In our main experimental results, we combined our methodology with CRPO \citep{xu_2021}, which immediately switches between maximizing total reward and minimizing total cost depending on safety constraint satisfaction. To demonstrate the flexibility of our approach, in this section we also combine our OTP framework with a Lagrangian-based safe RL update method \citep{tessler_2019_constrained,ray_2019}. Lagrangian-based methods update the policy based on a Lagrangian relaxation of \eqref{eq:safe_rl_PO} given by  
\begin{equation}\label{eq:safe_rl_lagrange_PO}
\min_{\lambda \geq 0} \, \max_{\pi} \,\, \E_{s \sim \mathcal{D}} \left[ \E_{a \sim \pi(\cdot \mid s)} \left[ Q^{\pi_k}_{p,r}(s,a) \right] \right] - \lambda \left( \E_{s \sim \mathcal{D}} \left[ \E_{a \sim \pi(\cdot \mid s)} \left[ Q^{\pi_k}_{p,c}(s,a) \right] \right] - B \right),
\end{equation}
where the Lagrange dual parameter $\lambda$ is updated at a slower scale throughout training. Similarly, this approach can be applied to the robust and safe policy update in \eqref{eq:robust_safe_rl_PO}. We apply this update method with a dual learning rate of $5\textnormal{e}{-6}$ to both standard safe RL and our OTP framework. In \tabref{tab:summary_crpo_vs_lagrange} and \figref{fig:base_vs_otp_lagrange_bars}, we compare these experimental results to the same algorithms using CRPO. We also provide detailed results for the Lagrangian-based implementations across tasks and test environments in \figref{fig:base_vs_otp_lagrange_lines}.

Importantly, we see the benefits of our OTP framework across both choices of update method (CRPO and Lagrange) in terms of robust performance and safety. In both cases, our OTP framework increases the average total reward across unseen test environments at deployment time compared to standard safe RL, and leads to a significant improvement in safety at deployment time. As shown in \figref{fig:base_vs_otp_lagrange_bars}, the performance difference between standard safe RL and OTP is very similar across update methods. The only meaningful difference occurs in Quadruped Walk, where the Lagrangian-based approach demonstrates stronger performance and safety. However, the slight improvements shown by the Langrangian-based updates at deployment time may be the result of increased unsafe exploration during training. Because the Lagrange dual parameter $\lambda$ is updated at a slow scale throughout training, the Lagrangian relaxation in \eqref{eq:safe_rl_lagrange_PO} can result in safety constraint violations during training. We see in \tabref{tab:summary_crpo_vs_lagrange} that CRPO achieves higher levels of safety during the training process compared to a Lagrangian-based update method, as CRPO quickly reacts to safety constraint violations during training by directly minimizing total cost. Therefore, in settings where safety is also important throughout the training process, CRPO may be preferred over a Langrangian-based approach.


\begin{figure}[t]
\begin{center}
\includegraphics[width=1.0\textwidth]{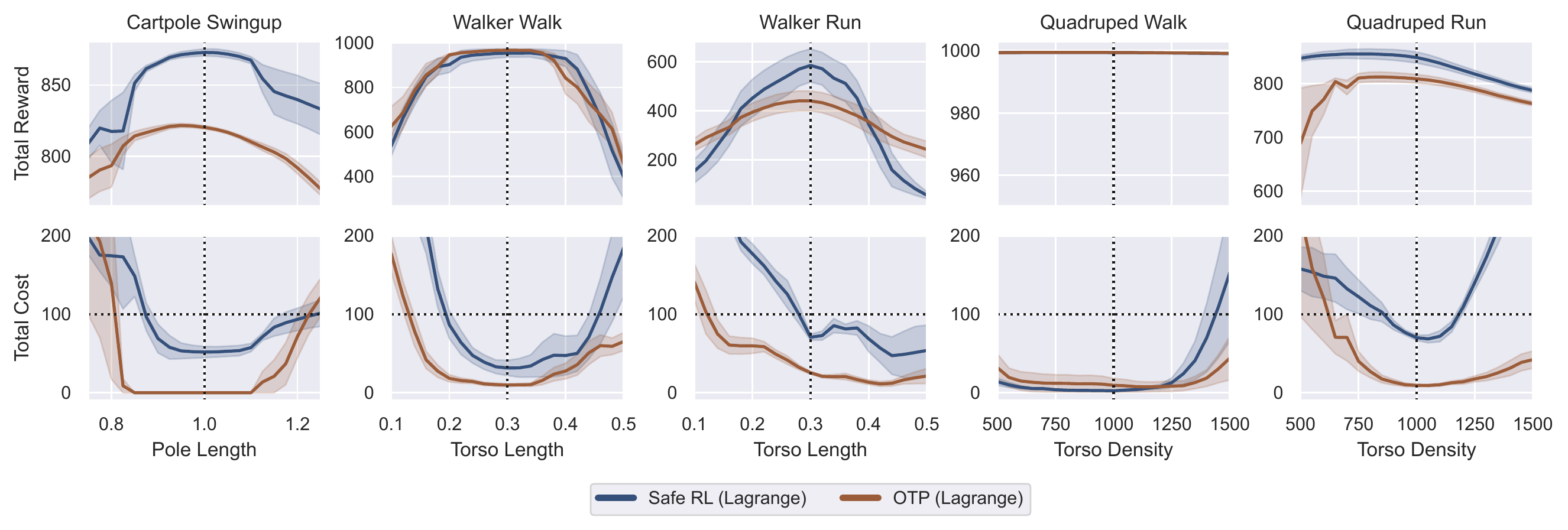}
\caption{Comparison of OTP with standard safe RL across tasks and test environments using a Lagrangian-based update method. Shading denotes one standard error across policies. Vertical dotted lines represent nominal training environment. Top: Total reward. Bottom: Total cost, where horizontal dotted lines represent the safety budget and values below these lines represent safety constraint satisfaction.}
\label{fig:base_vs_otp_lagrange_lines}
\end{center}
\end{figure}



\end{document}